\theoremstyle{definition}
\par\vspace{1mm}, % Minimal space before the box
\par\vspace{1mm}, % Minimal space after the box
\definecolor{codegreen}{rgb}{0,0.6,0}
\definecolor{codegray}{rgb}{0.5,0.5,0.5}
\definecolor{codepurple}{rgb}{0.58,0,0.82}
\definecolor{backcolour}{rgb}{0.95,0.95,0.92}
\definecolor{darkgreen}{rgb}{0.06, 0.64, 0.43} % Adjust RGB values if needed
\newcommand{\lineref}[1]{Line~\ref{#1}}
\newcommand{\tool}{\texttt{ExPairT-LLM}\xspace}
\title{\tool: Exact Learning for LLM Code Selection by Pairwise Queries{}}
\author{
    %Authors
    % All authors must be in the same font size and format.
    Tom Yuviler, Dana Drachsler-Cohen
}
\title{My Publication Title --- Single Author}
\author {
    Author Name
}
\title{My Publication Title --- Multiple Authors}
\author {
    % Authors
    First Author Name\textsuperscript{\rm 1,\rm 2},
    Second Author Name\textsuperscript{\rm 2},
    Third Author Name\textsuperscript{\rm 1}
}
\begin{document}

\maketitle

\begin{abstract}
Despite recent advances in LLMs, the task of code generation is still challenging.
To cope, code selection algorithms select the best program from multiple programs generated by an LLM.
However, existing algorithms can fail to identify the correct program,
either because they fail to distinguish nonequivalent programs or because they rely on an LLM and assume it always correctly determines the output for every input.
We present \tool, an exact learning algorithm for code selection that selects a program by posing two new types of queries to an LLM oracle: pairwise membership and pairwise equivalence. These queries are simpler for LLMs and enable \tool to identify the correct program through a tournament, which is robust to some LLM mistakes. We evaluate \tool on four popular code datasets. Its pass@1 (success rate) outperforms the state-of-the-art code selection algorithm on average by $+13.0\%$ and up to $+27.1\%$. It also improves the pass@1 of LLMs performing complex reasoning by $+24.0\%$.
\end{abstract}

% Uncomment the following to link to your code, datasets, an extended version or similar.
% You must keep this block between (not within) the abstract and the main body of the paper.
\begin{links}
    \link{Code}{https://github.com/TomYuviler/ExPairT-LLM}
    \link{Extended version}{https://arxiv.org/abs/2511.10855}
\end{links}
\section{Introduction}
Large language models (LLMs)
show remarkable performance in code generation~\cite{Code_Llama,HumanEval,NijkampPHTWZSX23,LiAZMKMMALCLZZW23,DeepSeekCoder,LuoX0SGHT0LJ24,FengGTDFGS0LJZ20,Qwen2Coder}. However, they may return code solutions that do not meet the task description, do not align with the input-output examples, or do not even compile~\cite{LiuXW023,Shihan24}.
Code selection algorithms let the LLM generate multiple programs and automatically identify the correct program among them. Some of these approaches rely on input-output examples generated by the LLM~\cite{ChenZNZLLC23,Mouxiang24,ToNB24}; however, the outputs may be incorrect. Other works generate inputs, group programs based on their outputs, and select the cluster containing the most programs~\cite{alpahCode,ShiFGZW22}. Another approach is to use neural networks to estimate the correctness of a program without relying on test inputs~\cite{InalaWYCELMG22,ZhangYHLYF023}. However, these approaches can fail to identify the correct program if the input-output examples are incorrect or if the examples do not differentiate between nonequivalent programs, which can lead to returning an incorrect program. An exception is a method that looks for differentiating inputs through fuzzing and symbolic execution~\cite{FanRMR24}.
However, it is not robust to LLM errors, and fuzzing and symbolic execution tend to fail on tasks with complex input constraints~\cite{FanRMR24}. 

\begin{figure*}[t!]
  \centering
  \includegraphics[width=0.8\linewidth]{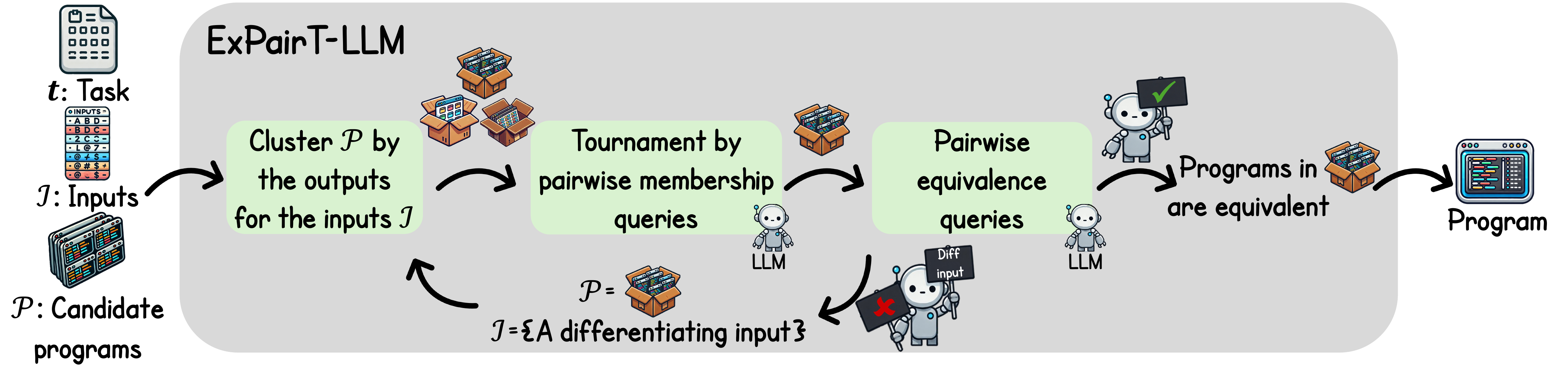}
  \caption{\tool: A code selection algorithm by pairwise queries.}
  \label{fig:overview} 
\end{figure*}

We extend exact learning~\cite{Angluin87b,Angluin87a,Bshouty13,Angluin04} to the problem of LLM code selection. 
We rely on an LLM as the oracle, similarly to prior works for automatically generating input-output examples~\cite{ChenZNZLLC23,Mouxiang24,ToNB24}, evaluating generated text~\cite{Mingqi23,Xinghua23,ChenWJSX23}, and annotating data~\cite{TanLWBJBKL0024,HeLGJZLJYDC24,DingQLCLJB23}.
The challenge is that the traditional queries of exact learning are practically infeasible:
(1)~posing membership queries to the LLM can lead to a very large number 
of queries only to identify the correct output for a \emph{single} input and (2)~equivalence queries are still too challenging for LLMs.
However, LLMs are very successful in pairwise comparisons. 
Previous works use LLMs as judges to decide between two generated texts~\cite{LiusieMG24,LiusieRFG24,Yinhong24,QinJHZWYSLLMWB24} or for chatbot evaluation~\cite{ZhengC00WZL0LXZ23}. 
Recent advancements in 
LLMs' complex reasoning~\cite{Wei0SBIXCLZ22,resoning23,Xuezhi23}
provide the opportunity to use LLMs as judges for the complex task of code selection. 
However, naively selecting a program for a given task by a series of pairwise comparisons over the candidate programs is still too challenging for LLMs.

We introduce new kinds of queries: 
\emph{pairwise equivalence} and \emph{pairwise membership}.
Our queries are simpler for LLMs and enable a learner
to be robust to some oracle mistakes via a tournament~\cite{Trawinski63,Huber63}. A pairwise equivalence query asks whether two programs are equivalent, and if not, the oracle returns a differentiating input. A pairwise membership query asks the oracle to select the better of two sets of outputs for given inputs.
These queries enable identifying correct outputs and constructing a sufficient set of input examples to select a correct program. Further, they enable robustness to some oracle mistakes: differentiating inputs can be validated by running the programs, and the best output
from a set of outputs is robustly determined by
tournaments, which rely on pairwise comparisons.

 We present \tool ({\bf Ex}act learning by {\bf Pair}wise {\bf T}ournament), shown in \Cref{fig:overview}. Given a coding task, initial inputs, and candidate programs, it first clusters the programs based on their outputs for the inputs. It then identifies the correct cluster through a tournament between the clusters, which poses pairwise membership queries to an LLM. 
Next, it checks whether the selected cluster contains only equivalent programs by posing pairwise equivalence queries to the LLM.
If so, it returns one of the programs.
Otherwise,
it refines the cluster based on the differentiating input and repeats this process. If the LLM is always correct, we show that \tool is an exact learner. Otherwise, we show lower bounds on the probabilities that \tool chooses the correct cluster and identifies that a cluster has to be refined. The number of pairwise membership queries and pairwise equivalence queries is at most $|\mathcal{P}| \choose 2$ for each type, where $|\mathcal{P}|$ is the number of candidate programs.

We evaluate \tool on code datasets: \emph{HumanEval}~\cite{HumanEval}, \emph{MBPP-sanitized}~\cite{Austin21}, \emph{APPS}~\cite{hendrycks2021measuring}, and {\emph{LiveCodeBench}}~\cite{jain2025livecodebench}. Its pass@1 (success rate) exceeds $\mathcal{B}^4$~\cite{Mouxiang24}, the state-of-the-art, by $+13.0\%$ and \textsc{CodeT}~\cite{ChenZNZLLC23} by $+16.6\%$.
\tool also improves the pass@1 of LLMs performing complex reasoning: 
by $+32.8\%$ for \emph{OpenAI~o1-mini}~\cite{openaio1mini}, by $+20.4\%$ for \emph{DeepSeek-R1}~\cite{deepseek}, and by $+18.9\%$ for \emph{Gemini 2.5 Flash}~\cite{gemini}.
Both types of pairwise queries are important:
the pass@1 of \tool is higher by $+7.7\%$ as a result of pairwise equivalence queries and 
by $+24.6\%$ as a result of pairwise membership queries.

\section{Related Work}

\paragraph{Code Selection.}
Several code selection algorithms rely on LLMs to generate input-output examples and cluster programs by their consistency with the examples~\cite{ChenZNZLLC23,Mouxiang24,ToNB24}. Other approaches generate inputs, group programs by their outputs, and select the cluster with the most programs~\cite{alpahCode,ShiFGZW22}.
However, they can fail for various reasons: if the input-output examples are incorrect, if the selected cluster is incorrect, or if the selected cluster contains nonequivalent programs.
\texttt{LLMCodeChoice}~uses fuzzing and symbolic execution to find inputs that differentiate candidate programs~\cite{FanRMR24}; however, it is unsuitable for tasks with complex input constraints and assumes the LLM is always correct. Others use neural networks to estimate the correctness of a program for a task without executing the program on test inputs~\cite{InalaWYCELMG22,ZhangYHLYF023}.

\paragraph{LLM as an Oracle.}  
Relying on LLMs as oracles (judges) for selection between two candidates has been proposed for text-related tasks, including summarization~\cite{LiusieMG24,LiusieRFG24,Yinhong24}, chat assistants~\cite{LiusieMG24,LiusieRFG24,ZhengC00WZL0LXZ23}, content generation~\cite{Yinhong24}, and information retrieval~\cite{QinJHZWYSLLMWB24}.  
CodeRL~\cite{Le0GSH22} employs reinforcement learning to enhance code-generating LLMs during training and inference by utilizing a separate LLM trained to assess the functional correctness of generated code samples.  
Other works utilize LLMs for self-repair in code generation models, providing feedback that identifies errors from execution results and generates explanations to fix the code~\cite{ChenLSZ24,ZhangLLLJ23,OlaussonIW0S24}.

\paragraph{Program Synthesis.}   
Our work is related to 
\emph{oracle-guided inductive synthesis} (OGIS)~\cite{JhaS17,JiLXZH20,JiKXH23}, which 
generates a program meeting a specification by 
 interacting with a SAT/SMT oracle. 
In particular, it is related to \emph{counterexample-guided inductive synthesis} (CEGIS)~\cite{Solar-LezamaTBSS06,AlurBJMRSSSTU13,AbateDKKP18}, where if the oracle eliminates a candidate program, it also returns a counterexample.

\section{Problem Definition}

\paragraph{Tasks and Programs.}
A coding task consists of a natural language description (e.g., a docstring describing the function) and possibly input-output examples. \Cref{fig:tasks} shows task examples. If a program $p$ satisfies a task $t$, we say $p$ is correct and write $p\models t$. For each task, we are given a finite set of \emph{candidate programs} $\mathcal{P}$. We assume that there exists $p\in \mathcal{P}$ that is correct and that the programs are deterministic and terminate for every input (stochastic or nonterminating programs can be removed in a preprocessing step by running them on several inputs with a reasonable timeout). Obtaining the set of candidates is orthogonal to our problem.

\begin{figure}[tb]
  \centering
  \includegraphics[width=\linewidth]{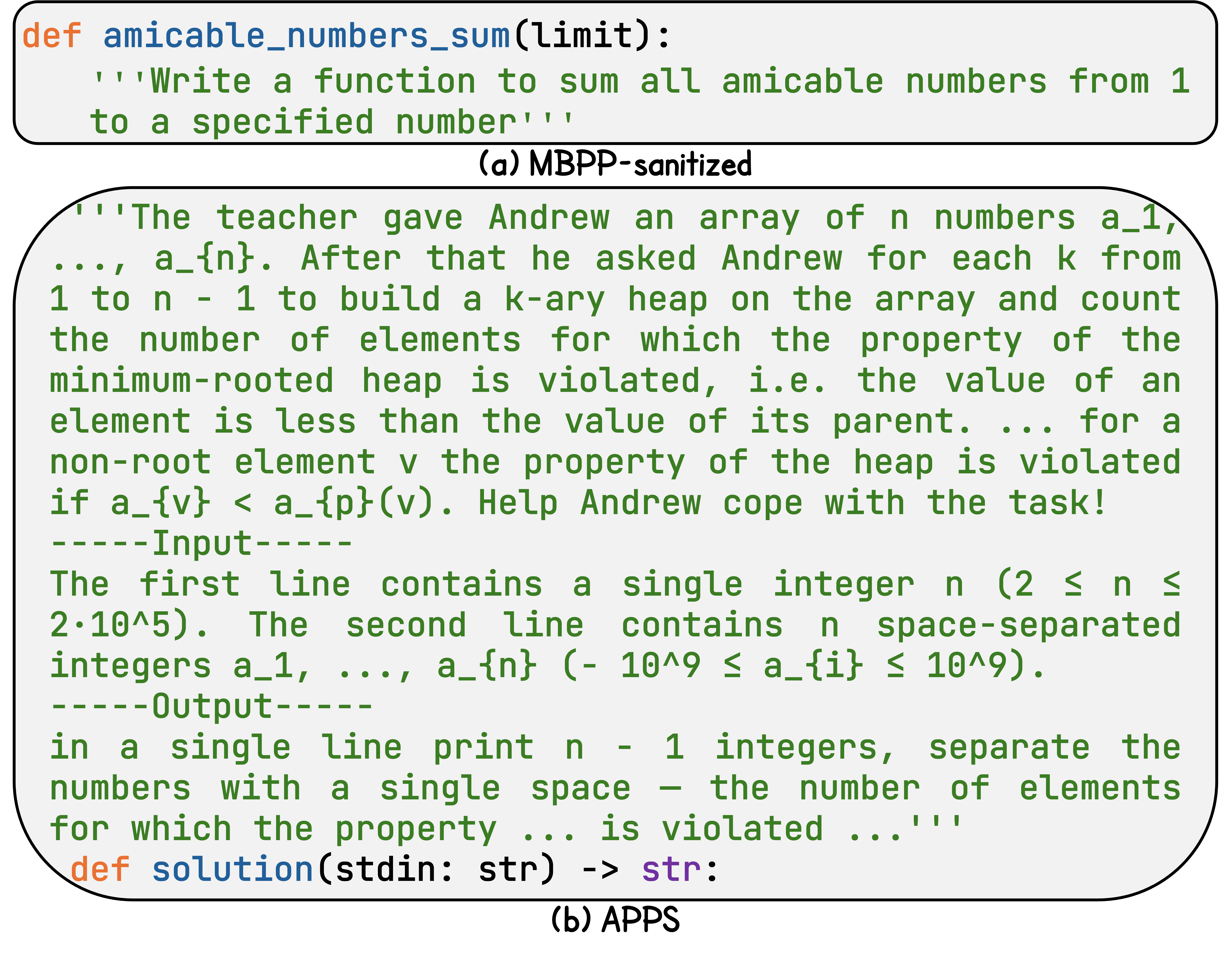}
  \caption{Tasks from (a)~\emph{MBPP-sanitized}~\cite{Austin21}, and (b)~\emph{APPS}~\cite{hendrycks2021measuring}.}
  \label{fig:tasks}
\end{figure}

\paragraph{Code Selection.} Given a task~$t$ and a set of programs $\mathcal{P}$ that contains a correct program, a code selection algorithm $\mathcal{A}$ aims to return $p\in \mathcal{P}$ satisfying $t$. We evaluate algorithm $\mathcal{A}$ by its pass@1~\cite{HumanEval} (success rate).
Given a set of task-program-set pairs $\mathcal{T}$, the pass@1 of $\mathcal{A}$ is the percentage of tasks for which it returns a correct program:
\begin{equation}
    \text{pass@1}(\mathcal{A}) = \frac{100}{|\mathcal{T}|} \sum_{(t_i, \mathcal{P}_i) \in \mathcal{T}} 
    \mathbb{I}
    \left[
         \mathcal{A}(t_i, \mathcal{P}_i) \models t_i 
   \right]
    \label{eq:success_rate}
\end{equation}

\paragraph{Challenges.}
This problem is challenging. First, tasks are described by natural language and examples, which may be ambiguous, under-specified, or require domain knowledge, e.g., mathematics (\Cref{fig:tasks}(a)). Second, the candidate programs are often too complex~\cite{Shihan24} to be analyzed by existing program analyzers, especially if the tasks pose complex input constraints (e.g.,~\Cref{fig:tasks}(b)), which are not easily expressed with existing tools. Third, the algorithm cannot interact with a user to obtain more information. Our problem is not a classical program synthesis problem~\cite{GulwaniPS17,AlurBJMRSSSTU13}. First, the program space can be complex, consisting of hundreds of lines of code, spanning a wide range of operations, data structures, and sometimes uncommon libraries. Second, the task may include a lengthy natural language description that is difficult to translate into a formal specification. Third, the tasks can vary significantly, e.g., mathematics (\Cref{fig:tasks}(a)) and algorithms and data structures (\Cref{fig:tasks}(b)), whereas program synthesizers often focus on related tasks. 
\section{Pairwise Membership \& Equivalence Queries}

In this section, we present our idea: exact learning by posing new queries to an oracle.

\paragraph{Exact Learning.}
In exact learning~\cite{Angluin87b,Angluin87a}, a learner identifies a target concept (a set) by interacting with an oracle. Typically, the learner poses two types of queries. A membership query asks whether an element is a member of the target, and the oracle replies with \emph{yes} or \emph{no}. An equivalence query asks if a candidate concept is equivalent to the target. If the oracle replies \emph{yes}, the learner terminates. Otherwise, the oracle returns a counterexample element. Often, exact learning algorithms identify the target concept if it is in the search space and the oracle answers correctly.
In our setting, a membership query asks whether an input-output pair aligns with the given task, and an equivalence query asks whether a program satisfies the given task. However, membership queries may require an excessively large number of interactions to determine the correct output for a single input, and equivalence queries are too challenging for LLMs.

\paragraph{Pairwise Comparisons.}
LLMs have been shown to be effective as judges for selecting between two candidates in text-related tasks~\cite{LiusieMG24,LiusieRFG24,Yinhong24,QinJHZWYSLLMWB24,ZhengC00WZL0LXZ23}. Additionally, several works rely on 
pairwise comparisons to identify the best candidate from a set of candidates~\cite{WauthierJJ13,JamiesonN11,ShahW17,Cattelan12,Newman22}. One robust approach is \emph{Copeland's method}~\cite{Cop51}, a voting system that determines the winner by head-to-head comparisons between each pair of candidates. 
In every comparison, one candidate gains a point. The winner is the candidate with the most points. Theoretically, we could rely on Copeland's method to identify a correct program by posing to the LLM oracle a series of queries, each of which presents two programs and asks the oracle to select the better program for the given task. However, such pairwise queries are still challenging for LLMs.

\paragraph{Pairwise Queries.}
Recent advancements have significantly enhanced LLM capabilities in complex reasoning~\cite{Wei0SBIXCLZ22,resoning23,Xuezhi23}, leading to remarkable performance in coding problems. While the previous query types are challenging for them, we observe that LLMs can reliably answer \emph{pairwise membership} and \emph{pairwise equivalence} queries. A pairwise membership query consists of a task, a list of $k$ inputs, and two lists of $k$ outputs $(t, \mathcal{I},O_1,O_2)$, asking which outputs are more suitable for $t$ and $\mathcal{I}$: $O_1$ or $O_2$.
A pairwise equivalence query consists of a task and two programs $(t, p_1,p_2)$, asking whether the programs are semantically equivalent
with respect to $t$. The oracle replies with \emph{yes} or \emph{no}. If it replies \emph{no}, it provides a differentiating input \texttt{x}, i.e., $p_1(\texttt{x})\neq p_2(\texttt{x})$. As an example, consider the task $t$ of computing the length of a string \texttt{s}. A pairwise membership query is \texttt{($t$,[`Banana'],[5],[6])}, whose answer is \texttt{[6]}. A pairwise equivalence query is $(t, \texttt{return length(s)}, \texttt{return 6})$. The answer is \emph{no}, and a possible counterexample is \texttt{`Apple'} (the first program returns \texttt{5} and the second program returns~\texttt{6}). \section{Exact Learning by Pairwise Queries to LLMs}\label{sec:slect}
In this section, we introduce \tool.

\begin{algorithm}[tb]
  \caption{\tool($t$, $\mathcal{I}$, $\mathcal{P}$, $\mathcal{L}$)}
  \label{alg:selecting}
  \begin{algorithmic}[1]
      \Statex\texttt{\bf Input: } Task $t$, inputs $\mathcal{I}$, programs $\mathcal{P}$, and LLM $\mathcal{L}$.
    \Statex\texttt{\bf Output: } A program $p \in \mathcal{P}$.
    \State $C^* = \mathcal{P}$
      \Comment{The selected cluster; initially all programs}\label{ln:init}
    \While{\texttt{True}}\label{ln:while}
      \State     $\mathcal{C},\mathcal{O} = \texttt{cluster}(\mathcal{I}, {C}^*)$ \Comment{Clusters and outputs}\label{ln:div}
      \State $\texttt{scores} = [0] \times |\mathcal{C}|$ \Comment{Zero scores for clusters}\label{ln:counts_init}
    \For{$i = 1,\dots,|\mathcal{C}|$}\label{ln:smem}\Comment{Tournament}
        \For{$j = i+1,\dots,|\mathcal{C}|$}
      \State $\texttt{out} = \mathcal{L}.\mathtt{membership}(t, \mathcal{I}, \mathcal{O}[i], \mathcal{O}[j])$
\If{$\texttt{out} == \mathcal{O}[i]$} $\texttt{scores}[i] \mathrel{+}= 1$
\Else\  $\texttt{scores}[j] \mathrel{+}= 1$\label{ln:emem}
\EndIf
              \EndFor

      \EndFor
       \State ${C}^* = \mathcal{C}[\text{argmax}(\texttt{scores})]$ \Comment{Selected cluster}\label{ln:argmax}
       \State  $\texttt{eq} = \texttt{True}$\label{ln:flagf}
       \For{$i = 2,\dots,|{C}^*|$}\label{ln:sequ}
        \State         $\texttt{x} = \mathcal{L}.\mathtt{equivalence}(t, {C}^*[1], {C}^*[i])$\label{ln:eequ}
        \If{$\mathtt{x} \neq \bot \;\textbf{and}\; {C}^*[1](\mathtt{x}) \neq {C}^*[i](\mathtt{x})$}\label{ln:sfoundfiff}
        \State $\mathcal{I}$ = $[ \texttt{x}];\texttt{eq} = \texttt{False};\textbf{Break}$\label{ln:efoundfiff}
        \EndIf
       \EndFor
    \If{$\mathtt{eq}$}
    \Return ${C}^*[1]$\label{ln:sreturn_1}
    \EndIf
    
    \EndWhile
    \State \texttt{\bf Function }\texttt{cluster($\mathcal{I}$, $\mathcal{P}$)}\label{alg:clustering}
    \Statex\texttt{\bf Input: } Inputs $\mathcal{I}$ and candidate programs $\mathcal{P}$.
    \Statex\texttt{\bf Output: } Clusters of programs $\mathcal{C}$ and outputs $\mathcal{O}$.
    \State $\mathcal{C} = \texttt{[]}$;
$\mathcal{O} = \texttt{[]}$ \Comment{Lists of clusters and their outputs}\label{ln:clusteringb}
  \ForAll{$p \in \mathcal{P}$}
    \State     $\texttt{outputs} = [p(i) \mid i \in \mathcal{I}]$ \Comment{The outputs of $p$}
    \State $\texttt{found} = \texttt{False}$
    \For{$i = 1,\dots,|\mathcal{C}|$}
    \If{$\mathtt{outputs == }\ \mathcal{O}\mathtt{[i]}$}
     \State $\mathcal{C}\texttt{[i].append}(p);
            \texttt{found=True};
            \textbf{Break}$ 
    \EndIf
    \EndFor
    \If{$\textbf{not } \mathtt{found}$}
        \State $\mathcal{C}\texttt{.append([p])};
        \mathcal{O}\texttt{.append(outputs)}$
    \EndIf

  \EndFor    
  \State \Return $\mathcal{C}$, $\mathcal{O}$\label{ln:clusteringe}
  \end{algorithmic}
\end{algorithm}

\paragraph{Pseudocode.}  
\Cref{alg:selecting} shows its pseudocode. %the algorithm of \tool. 
The inputs are a coding task in natural language \(t\), 
a list of inputs \(\mathcal{I}\) (provided by the user or LLM-generated), 
a list of candidate programs \(\mathcal{P}\), and an LLM \(\mathcal{L}\). It returns a program \(p \in \mathcal{P}\), intended to satisfy \(t\).
It first initializes the selected cluster \({C}^*\) to \(\mathcal{P}\) (\lineref{ln:init}). Next, it iteratively refines \({C}^*\) until \({C}^*\) contains only equivalent programs (\lineref{ln:while}). 
Each iteration begins by invoking the \texttt{cluster} function (\lineref{ln:div}). This function partitions \({C}^*\) into clusters $\mathcal{C}$ and their respective outputs $\mathcal{O}$, based on the outputs of programs in \({C}^*\) for the inputs~$\mathcal{I}$ (\lineref{ln:clusteringb}--\lineref{ln:clusteringe}). Then, \tool initializes the \texttt{scores} array, which records the clusters' scores (\lineref{ln:counts_init}).
Then, Copeland's method is executed by posing a pairwise membership query for each pair of clusters (\lineref{ln:smem}--\lineref{ln:emem}). The cluster with the maximal score is set to $C^*$ (\lineref{ln:argmax}).
Next, \tool checks if the programs in $C^*$ are equivalent.
It initializes the \texttt{eq} flag to \texttt{True} (\lineref{ln:flagf}).
 It then poses pairwise equivalence queries to the LLM between the first program in $C^*$ and every other program in \({C}^*\) (\lineref{ln:sequ}--\lineref{ln:eequ}). 
 If the LLM finds a differentiating input \texttt{x} for two programs, \tool validates it by executing both programs on \texttt{x} (\lineref{ln:sfoundfiff}). 
If the validation succeeds, \tool sets the inputs \(\mathcal{I}\) to [\texttt{x}], negates the flag \texttt{eq}, and begins another iteration to refine~$C^*$ (\lineref{ln:efoundfiff}).
Otherwise, if \texttt{eq} remains \texttt{True}, \tool returns the first program in~$C^*$ (\lineref{ln:sreturn_1}).
The appendix shows the query prompts.

\paragraph{Example.}
\Cref{fig:example} shows a running example, for the task of computing the length of a string \texttt{s}, the inputs $\mathcal{I}=[\texttt{`Banana'}]$, and 
four candidate programs (\Cref{fig:example}(a)). \tool runs every program on the input and clusters the programs based on their outputs (\Cref{fig:example}(b)): 
 $A$ (for \texttt{6}), $B$ (for \texttt{5}), and $C$ (for \texttt{4}). 
\tool identifies the correct cluster by three pairwise membership queries to determine the most suitable output for \texttt{`Banana'} (\Cref{fig:example}(c)).
Accordingly, it selects cluster~$A$. Note that when two incorrect outputs are compared (e.g., \texttt{5} and \texttt{4}), the oracle selects one of them.
 Next, it poses a pairwise equivalence query to the LLM for the two programs in~$A$ (\Cref{fig:example}(d)). Since the programs are not equivalent, the LLM returns a differentiating input \texttt{`Apple'}.
\tool then refines cluster~$A$ into two clusters~$A_1$ and $A_2$ based on the output for \texttt{`Apple'} (\Cref{fig:example}(e)). 
It then selects the correct cluster by a pairwise membership query to determine the more suitable output for \texttt{`Apple'}  (\Cref{fig:example}(f)). Accordingly, it selects~$A_1$. Since~$A_1$ contains only one program, \tool returns this program (\Cref{fig:example}(g)).

\begin{figure*}[t!]
  \centering
  \includegraphics[width=\linewidth]{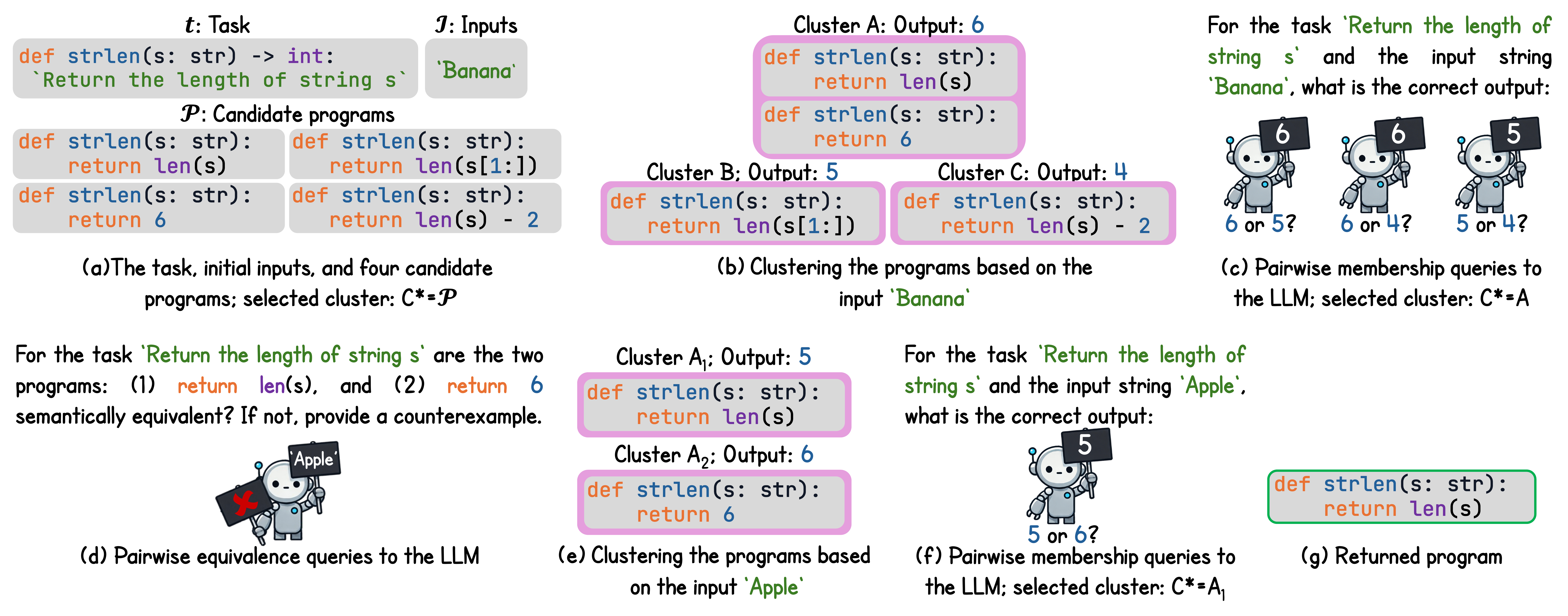}
  \caption{A running example of \tool.}
  \label{fig:example}
\end{figure*}

\paragraph{Advantages.}
\tool has several advantages. First, unlike approaches that return a program from the first selected cluster, 
it checks if the selected cluster contains nonequivalent programs. 
Second, unlike most approaches where the input examples are not adapted to the candidate programs, 
it generates inputs for distinguishing nonequivalent programs. 
Third, unlike approaches in which a single incorrect answer by the LLM can lead to failure,
\tool is robust to some LLM errors: (1)~it selects a cluster by the aggregated 
score in Copeland's method, and (2)~it validates that differentiating inputs are indeed differentiating by running the programs.
Fourth, unlike approaches that rely on fuzzing and symbolic execution, it can handle tasks with complex input constraints.
Fifth, it leverages recent advancements enabling LLMs to perform exceptionally well in pairwise 
comparisons~\cite{LiusieMG24,LiusieRFG24,Yinhong24,QinJHZWYSLLMWB24}.

\paragraph{Limitations.}
Since \tool compares the outputs of candidate programs, it targets tasks that can be implemented as stateless functions, common in many widely used code generation benchmarks~\cite{HumanEval,Austin21,hendrycks2021measuring,jain2025livecodebench}. 
It is not designed for programs that do not take any input (e.g., retrieving static configuration settings), do not generate explicit outputs (e.g., updating storage without direct feedback), or exhibit nondeterminism or nontermination (relevant for specialized code generation benchmarks, e.g., DS-1000~\cite{dsLai0WZZZYFWY23} and \textsc{BigCodeBench}~\cite{ZhuoVCH0WYZHPB025}).

\paragraph{Guarantees.}\label{subsec:analysis}
We next present theoretical guarantees and proof sketches. 
Full proofs are in the appendix. 
Recall that the program space $\mathcal{P}$ is finite, contains a correct program, and consists of terminating and deterministic programs. 
We first show that \tool terminates.
Second, if the oracle  $\mathcal{L}$ is accurate, \tool returns the correct program. Otherwise, we show a lower bound on the probability of it choosing the correct cluster and show the probability of identifying that a cluster contains nonequivalent programs.
Lastly, we analyze the number of queries.

\begin{restatable}[]{lemma}{fta}
    \label{lem:termination}
   % If the oracle is accurate,
    \Cref{alg:selecting} terminates within $|\mathcal{P}|$ iterations.
    \end{restatable}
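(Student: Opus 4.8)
The plan is to exhibit a strictly decreasing integer ranking function over the iterations of the \texttt{while} loop and invoke well-foundedness of the naturals. Let $C^*_k$ denote the cluster selected by the tournament on \lineref{ln:argmax} in iteration $k$, and set $s_k = |C^*_k|$. I claim that if iteration $k$ does \emph{not} return on \lineref{ln:sreturn_1}, then $s_k \ge 2$ and $s_{k+1} \le s_k - 1$. These two facts together bound the number of iterations.

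For the first fact: if $s_k = 1$, the loop on \linerefrange{ln:sequ}{ln:efoundfiff} has an empty index range, so \texttt{eq} stays \texttt{True} and the algorithm returns; hence a non-returning iteration has $s_k \ge 2$. For the second fact: a non-returning iteration must have executed \lineref{ln:efoundfiff}, so the guard on \lineref{ln:sfoundfiff} held for some index $i$, i.e.\ $C^*_k[1](\mathtt{x}) \neq C^*_k[i](\mathtt{x})$, and $\mathcal{I}$ was reset to $[\mathtt{x}]$. In iteration $k+1$, \lineref{ln:div} calls \texttt{cluster}$([\mathtt{x}], C^*_k)$, which partitions $C^*_k$ by the programs' outputs on $\mathtt{x}$. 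Since $C^*_k[1]$ and $C^*_k[i]$ both belong to $C^*_k$ and disagree on $\mathtt{x}$, they land in different clusters, so \texttt{cluster} returns at least two nonempty clusters, each of size at most $s_k - 1$; the tournament then picks one of them, giving $s_{k+1} \le s_k - 1$.

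Combining the two facts, as long as the algorithm has not returned, the values $s_1, s_2, \dots$ form a strictly decreasing sequence of integers that are all $\ge 2$ and all $\le |\mathcal{P}|$ (each $C^*_k \subseteq \mathcal{P}$). Such a sequence has at most $|\mathcal{P}| - 1$ terms, so there are at most $|\mathcal{P}| - 1$ non-returning iterations, and the algorithm returns no later than iteration $|\mathcal{P}|$ (when $|\mathcal{P}| = 1$ it returns in the first iteration, consistent with the bound).

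I expect the only delicate point to be the claim that the clustering step in iteration $k+1$ genuinely refines $C^*_k$: this depends precisely on the validation guard on \lineref{ln:sfoundfiff}, which ensures $\mathtt{x}$ is a \emph{real} differentiating input for two programs that are still in the selected cluster --- an LLM-proposed $\mathtt{x}$ failing this check never triggers refinement, and without a genuine split the winning cluster's size need not drop. Once this is pinned down, the rest is routine bookkeeping about the control flow of \Cref{alg:selecting}.
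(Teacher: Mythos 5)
Your proposal is correct and follows essentially the same route as the paper's proof: both argue that any non-returning iteration ends with a validated differentiating input, which forces the next call to \texttt{cluster} to genuinely partition the previous $C^*$, so the selected cluster's size strictly decreases and the loop runs at most $|\mathcal{P}|$ times. Your explicit handling of the $|C^*|=1$ case and of the validation guard on \lineref{ln:sfoundfiff} just makes precise the same points the paper states more briefly.
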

It follows since the size of the selected cluster $C^*$ decreases at every iteration, starting from the second one.  

We next assume an accurate oracle that answers correctly.

\begin{restatable}[]{lemma}{ftb}
    \label{lem:mem}
    If $\mathcal{L}$ is accurate, at every iteration (\lineref{ln:while}), a correct program $p^* \in \mathcal{P}$ is in the selected cluster ${C}^*$.
\end{restatable}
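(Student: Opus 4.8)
The plan is to argue by induction on the iteration count that a correct program $p^*$ is always retained in $C^*$. For the base case, at the first entry to the loop (\lineref{ln:while}) we have $C^* = \mathcal{P}$ by \lineref{ln:init}, and $\mathcal{P}$ contains a correct program by assumption, so the claim holds. For the inductive step, suppose that at the start of some iteration $C^*$ contains a correct program $p^*$. We must show that after \lineref{ln:argmax} the (possibly smaller) cluster $C^*$ still contains a correct program.

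The key observation is that clustering is a semantic operation: the \texttt{cluster} function (\linerefrange{ln:clusteringb}{ln:clusteringe}) groups programs exactly by their output vectors on $\mathcal{I}$. Since all programs are deterministic and terminating, $p^*$ lands in a unique cluster $C_{p^*} \in \mathcal{C}$, and \emph{every} program in $C_{p^*}$ produces the same outputs on $\mathcal{I}$ as $p^*$ does — namely the correct outputs for the inputs $\mathcal{I}$ with respect to $t$. Hence $C_{p^*}$ is precisely the cluster whose associated output list $\mathcal{O}[\cdot]$ is the correct one. Next I would invoke accuracy of $\mathcal{L}$: in every pairwise membership query (\lineref{ln:smem}--\lineref{ln:emem}) involving $C_{p^*}$ versus some other cluster $C_j$, the outputs of $C_j$ differ from those of $p^*$ on at least one input of $\mathcal{I}$ and are therefore incorrect for $t$, so an accurate oracle returns $\mathcal{O}[C_{p^*}]$, awarding the point to $C_{p^*}$. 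Thus $C_{p^*}$ wins all of its $|\mathcal{C}|-1$ comparisons and attains score $|\mathcal{C}|-1$, while every other cluster loses at least its comparison against $C_{p^*}$ and so has score at most $|\mathcal{C}|-2$. Therefore $\text{argmax}(\texttt{scores})$ selects $C_{p^*}$ (it is the strict maximizer; any reasonable tie-break is moot), and after \lineref{ln:argmax} the cluster $C^*$ equals $C_{p^*}$, which contains $p^*$.

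Finally I would note that the only other place $C^*$ could change within an iteration is the equivalence-checking loop (\linerefrange{ln:sequ}{ln:efoundfiff}), but that loop does not modify $C^*$ — it only possibly resets $\mathcal{I}$ and sets the \texttt{eq} flag — so $C^*$ still contains $p^*$ when the iteration ends, completing the induction. The main obstacle is the bookkeeping in the Copeland-tournament step: one must verify not merely that $C_{p^*}$ is \emph{a} maximizer but that it is the \emph{unique} maximizer, which is where the strict score gap ($|\mathcal{C}|-1$ versus $\le |\mathcal{C}|-2$) is needed so that the behavior of $\text{argmax}$ is unambiguous; everything else reduces to the fact that clustering respects program semantics together with the accuracy hypothesis on $\mathcal{L}$.
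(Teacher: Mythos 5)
Your proposal is correct and follows essentially the same route as the paper's proof: induction on iterations with base case $C^*=\mathcal{P}$, and an inductive step showing that, under an accurate oracle, the cluster containing $p^*$ wins every pairwise membership comparison in Copeland's method and therefore attains the (strictly) maximal score, so it is selected in \lineref{ln:argmax}. Your extra remarks on the strict score gap and on the equivalence loop not modifying $C^*$ are fine elaborations of the same argument.
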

The proof is by induction. Base: $C^*=\mathcal{P}$. Step: since $\mathcal{L}$ is accurate, the cluster containing $p^*$ obtains the maximal score in Copeland's method and is thus selected as $C^*$.

\begin{restatable}[]{lemma}{ftc}%\begin{lemma}
    \label{lem:equ}
    If $\mathcal{L}$ is accurate, at any iteration that ${C}^*$ contains an incorrect program, $\mathcal{L}$ returns a differentiating input.
%\end{lemma}
\end{restatable}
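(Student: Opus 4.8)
The plan is to combine \Cref{lem:mem} with a short case analysis on the first program of the selected cluster. Fix any iteration of the while loop (\lineref{ln:while}) in which, after the tournament sets $C^*$ (\lineref{ln:argmax}), the cluster $C^*$ contains an incorrect program $p$, i.e., $p \not\models t$. By \Cref{lem:mem}, since $\mathcal{L}$ is accurate, $C^*$ also contains a correct program $p^* \models t$, and clearly $p \neq p^*$. The first fact I would establish is that a correct and an incorrect program are never semantically equivalent with respect to $t$: if they agreed on every input relevant to $t$, then $p$ would produce a correct output on every such input and hence satisfy $p \models t$, contradicting $p \not\models t$. An accurate oracle therefore must answer \emph{no} to the pairwise equivalence query on $(t, p^*, p)$ and, by the definition of an accurate answer, return a genuine differentiating input $x$ with $p^*(x) \neq p(x)$.

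Next I would show that the equivalence-checking loop (\linerefrange{ln:sequ}{ln:eequ}) is guaranteed to encounter a non-equivalent pair that passes the validation on \lineref{ln:sfoundfiff}. The loop only compares $C^*[1]$ against the other programs of $C^*$, so the argument must route through $C^*[1]$. Case~1: $C^*[1]$ is correct. Then $C^*[1]$ is not equivalent to the incorrect program $p \in C^*$, and $p = C^*[j]$ for some $j \ge 2$. Case~2: $C^*[1]$ is incorrect. Then $C^*[1]$ is not equivalent to $p^* \in C^*$, which appears as $C^*[j]$ for some $j \ge 2$. In either case there exists $j \ge 2$ such that $C^*[1]$ and $C^*[j]$ are not equivalent with respect to $t$.

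Finally, I would trace the loop's execution. For every index $i$ at which $C^*[1]$ and $C^*[i]$ are equivalent, an accurate oracle replies \emph{yes} (equivalently, returns $\bot$), so the test on \lineref{ln:sfoundfiff} fails and the loop continues. Hence the loop cannot terminate without breaking: by the previous paragraph there is some $j \ge 2$ with $C^*[1]$ and $C^*[j]$ non-equivalent, and at the first such index reached the accurate oracle returns $x \neq \bot$ with $C^*[1](x) \neq C^*[j](x)$; the validation on \lineref{ln:sfoundfiff} then passes, so $\mathcal{L}$ has returned a differentiating input in this iteration, as claimed (and the algorithm sets $\texttt{eq} = \texttt{False}$ and refines $C^*$).

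The main obstacle — indeed the only non-mechanical step — is the first one: pinning down the semantics of ``semantically equivalent with respect to $t$'' and of $\models$ precisely enough that ``correct $+$ equivalent $\Rightarrow$ correct'' holds, since this is exactly what forces an accurate oracle to produce a differentiating input rather than answering \emph{yes}. The rest is routine bookkeeping over the loop index and the remark that routing every comparison through $C^*[1]$ still suffices regardless of whether $C^*[1]$ is itself correct.
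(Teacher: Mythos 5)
Your proposal is correct and follows essentially the same route as the paper: invoke \Cref{lem:mem} to get a correct program into $C^*$, observe that a correct and an incorrect program cannot be semantically equivalent with respect to $t$, and conclude that the loop comparing $C^*[1]$ against every other program must elicit a (validated) differentiating input from an accurate oracle. The only cosmetic difference is that the paper closes the argument by contradiction via transitivity of equivalence, whereas you make an explicit case split on whether $C^*[1]$ is correct; the substance is identical.
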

By the assumption and~\Cref{lem:mem}, $C^*$ has an incorrect program $p'$ and a correct one $p^*$. 
\tool poses a pairwise equivalence query for the first program in $C^*$ and every other program in $C^*$. Since $\mathcal{L}$ is accurate and $p'\not \equiv p^*$, $\mathcal{L}$ must return a differentiating input.

\begin{restatable}[]{theorem}{ftd}%\begin{lemma}
%\begin{theorem}
\label{thm:correct}
If $\mathcal{L}$ is accurate, \tool returns a correct program.
%\end{theorem}
\end{restatable}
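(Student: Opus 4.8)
The plan is to assemble the three preceding lemmas. By \Cref{lem:termination}, the outer \textbf{while} loop (\lineref{ln:while}) is executed at most $|\mathcal{P}|$ times, and the only way the algorithm returns is through \lineref{ln:sreturn_1}, which is reached precisely in an iteration where the flag \texttt{eq} is still \texttt{True} after the pairwise-equivalence loop (\linerefrange{ln:sequ}{ln:efoundfiff}). So it suffices to show that, for an accurate $\mathcal{L}$, the program $C^*[1]$ returned in that iteration satisfies $t$, where $C^*$ is the cluster fixed on \lineref{ln:argmax} of that iteration (the equivalence loop only modifies $\mathcal{I}$ and \texttt{eq}, not $C^*$).

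The key step is to understand what \texttt{eq} $=$ \texttt{True} forces under accuracy. For each $i \in \{2,\dots,|C^*|\}$, the query $\mathcal{L}.\mathtt{equivalence}(t, C^*[1], C^*[i])$ either returned $\bot$, or returned some input \texttt{x}; in the latter case, accuracy of $\mathcal{L}$ guarantees $C^*[1](\texttt{x}) \neq C^*[i](\texttt{x})$, so the validation test on \lineref{ln:sfoundfiff} would succeed, \texttt{eq} would be set to \texttt{False} on \lineref{ln:efoundfiff}, and the algorithm would not return in this iteration. Hence \texttt{eq} $=$ \texttt{True} means $\mathcal{L}$ returned no differentiating input for any of these pairs in this iteration. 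Applying the contrapositive of \Cref{lem:equ}, this implies $C^*$ contains no incorrect program. Finally, by \Cref{lem:mem}, $C^*$ contains a correct program $p^* \in \mathcal{P}$, so $C^*$ is nonempty; combined with the previous sentence, every program in $C^*$ is correct, and in particular $C^*[1] \models t$, which is exactly the program returned on \lineref{ln:sreturn_1}.

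The argument is mostly bookkeeping over \Cref{lem:termination,lem:mem,lem:equ}; the one place requiring care is the middle step, namely upgrading the conclusion of \Cref{lem:equ} (``$\mathcal{L}$ returns a differentiating input'') to ``the algorithm detects it via the validation on \lineref{ln:sfoundfiff}, refines $\mathcal{I}$, and continues,'' so that termination can only happen once the selected cluster has been reduced to programs that are pairwise equivalent with respect to $t$ — and therefore, by \Cref{lem:mem}, all correct.
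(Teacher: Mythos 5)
Your proposal is correct and follows essentially the same route as the paper: termination via \Cref{lem:termination}, then (the contrapositive of) \Cref{lem:equ} to conclude the returning iteration's $C^*$ has no incorrect program, and \Cref{lem:mem} to guarantee a correct program is present, so $C^*[1]\models t$. The only cosmetic difference is that you phrase the middle step as ``no incorrect program remains'' (with the helpful explicit check that, under accuracy, any returned input passes the validation on \lineref{ln:sfoundfiff}), whereas the paper phrases it as ``$C^*$ contains only equivalent programs''; the content is the same.
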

Since \tool terminates, it returns a program. By~\Cref{lem:mem} and~\Cref{lem:equ}, it must be a correct program.

Next, we assume an inaccurate oracle. We begin with a lower bound on the probability that, at any iteration, $C^*$ is set to the cluster containing a correct program out of all clusters $\mathcal{C}=\{C_1,\ldots,C_{n+1}\}$ (\lineref{ln:argmax}). Without loss of generality, we assume it is $C_{n+1}$. Since $C^*$ is set to the cluster with the maximal score in Copeland's method, we consider the event \(A\) that $C_{n+1}$ has the maximal score
and provide a lower bound on its probability.
We make two assumptions, common in tournaments~\cite{Trawinski63,Huber63}. First, the oracle's responses are independent. Second, the probability that the 
oracle's response to a pairwise membership over (the outputs of) $C_i$ and $C_j$ is correct equals: (1)~$p$, for $p>0.5$, if 
 $C_i$ or $C_j$ is $C_{n+1}$ or (2)~$0.5$, otherwise.
 These assumptions hold in practice. First, queries are independent API calls to the LLM, which has no memory between them. Second, to illustrate that $p>0.5$ holds in practice, we measure the proportion of correct responses of \emph{OpenAI o1-mini}~\cite{openaio1mini} in pairwise membership queries over programs generated by \emph{CodeLlama}~\cite{Code_Llama} and \emph{DeepSeek-Coder}~\cite{DeepSeekCoder} on the \emph{HumanEval} dataset~\cite{HumanEval}. The results estimate $p=0.87$ for \emph{CodeLlama} and $p=0.96$ for \emph{DeepSeek-Coder}. %, reinforcing the practical validity of this assumption.
 
 \begin{figure}[tb]
  \centering
  \includegraphics[width=0.8\linewidth]{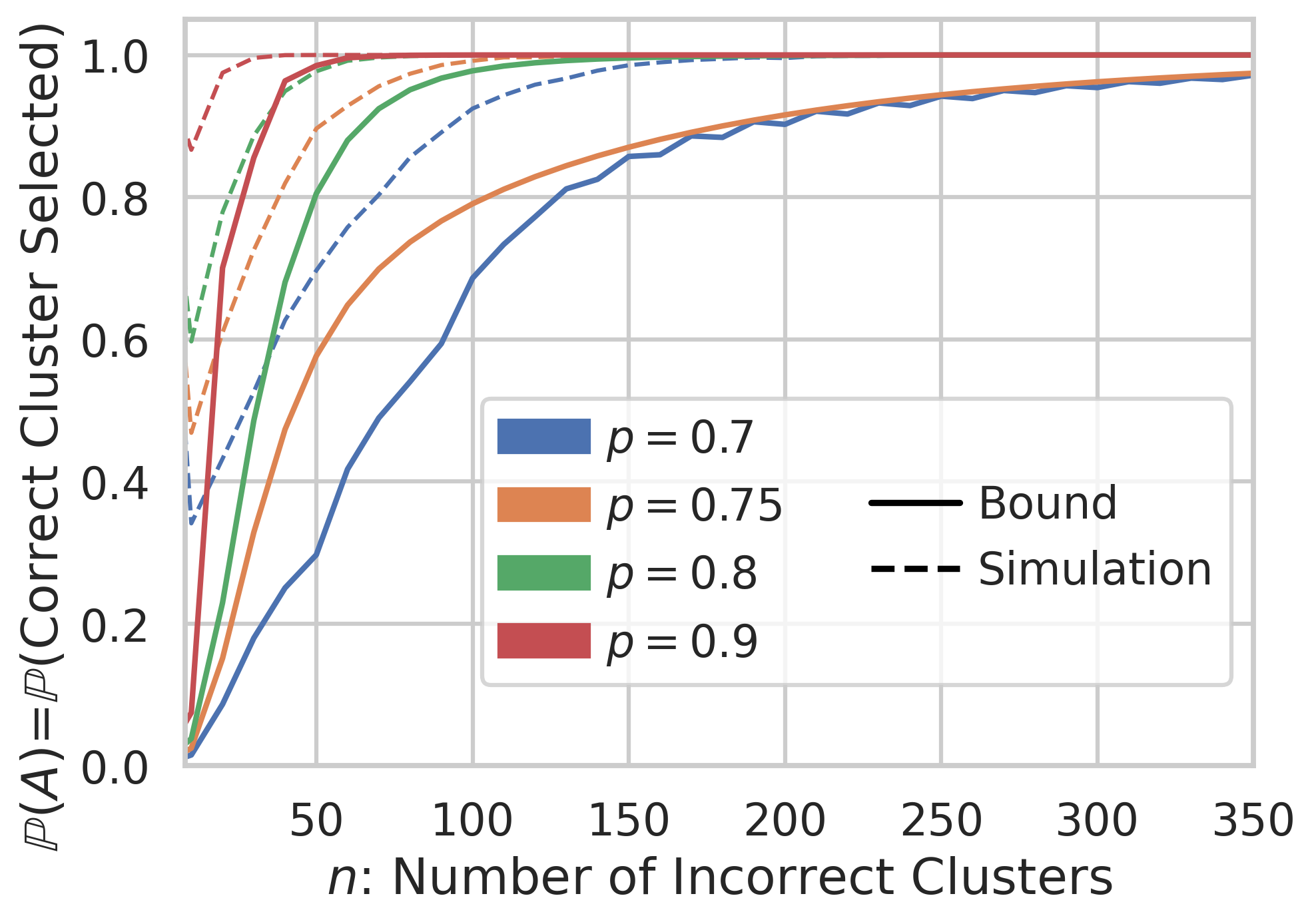}
  \caption{The empirical probability (dashed) and our lower bound (solid) for different \(p\).}
  \label{fig:lower_bound2}
\end{figure}

\begin{restatable}[]{theorem}{fth}%\begin{lemma}
%\begin{theorem}
\label{thm:clusters_bound}
Under the tournament's assumptions, a lower bound on the probability that the correct cluster $C_{n+1}$ has the maximal score in~\lineref{ln:argmax} is:
\begin{align*}
  \mathbb{P}(A)
  &
  \ge
    \biggl[\sum_{k=\lceil j\rceil+2}^{n}\binom{n}{k}\,p^k\,q^{\,n-k}\biggr]\\
  &\quad
  \cdot
    \biggl[1 - n\Bigl(\tfrac12\Bigr)^{n-1}
      \sum_{k=\lfloor j\rfloor+1}^{\,n-1}\binom{n-1}{k}
    \biggr],
\end{align*}
where $q\triangleq 1-p$ %. old
 and $j\in  [1,n-1]$ is a real-valued parameter. %new 
%\end{theorem}
\end{restatable}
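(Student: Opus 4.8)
The plan is to lower-bound $\mathbb{P}(A)$, the probability that the correct cluster $C_{n+1}$ achieves the strictly maximal Copeland score, by decomposing the event into two sub-events and bounding each separately. Let $S_{n+1}$ denote the number of points $C_{n+1}$ accumulates from its $n$ comparisons against $C_1,\dots,C_n$; by the tournament assumptions, $S_{n+1}\sim\mathrm{Binomial}(n,p)$. For each incorrect cluster $C_i$, let $S_i$ denote its score, which is the sum of one $\mathrm{Bernoulli}(q)$ term (its game against $C_{n+1}$) and $n-1$ independent $\mathrm{Bernoulli}(1/2)$ terms (its games against the other incorrect clusters). The first step is to condition on a threshold: fix the real parameter $j\in[1,n-1]$ and consider the event $E_1=\{S_{n+1}\ge \lceil j\rceil+2\}$ that $C_{n+1}$ scores at least $\lceil j\rceil+2$, and the event $E_2$ that \emph{every} incorrect cluster scores at most $\lceil j\rceil+1$ (equivalently, fails to reach $\lfloor j\rfloor+2$). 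On $E_1\cap E_2$, every incorrect cluster is strictly below $C_{n+1}$, so $A$ holds; hence $\mathbb{P}(A)\ge \mathbb{P}(E_1\cap E_2)$. I would then argue $\mathbb{P}(E_1\cap E_2)\ge \mathbb{P}(E_1)\cdot\mathbb{P}(E_2\mid E_1)$ and further that conditioning on $E_1$ (a large value of $S_{n+1}$) only makes the incorrect clusters \emph{less} likely to be large, so $\mathbb{P}(E_2\mid E_1)\ge \mathbb{P}(E_2)$ — or, more cleanly, bound $\mathbb{P}(E_2\mid E_1)$ from below by dropping the $C_{n+1}$-games entirely and treating each incorrect cluster's residual $n-1$ fair games. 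This yields the product form in the statement: the first bracket is exactly $\mathbb{P}(S_{n+1}\ge \lceil j\rceil+2)=\sum_{k=\lceil j\rceil+2}^n\binom nk p^k q^{n-k}$.

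For the second bracket, I would bound $\mathbb{P}(E_2)$ via a union bound over the $n$ incorrect clusters. For a single incorrect cluster $C_i$, the probability it reaches score $\lfloor j\rfloor+2$ or more is at most the probability that its $n-1$ fair games alone contribute at least $\lfloor j\rfloor+1$ points (absorbing the single possible point from the $C_{n+1}$-game into the threshold), which is $(1/2)^{n-1}\sum_{k=\lfloor j\rfloor+1}^{n-1}\binom{n-1}{k}$. Summing over the $n$ clusters gives $\mathbb{P}(\text{some incorrect cluster is too large})\le n(1/2)^{n-1}\sum_{k=\lfloor j\rfloor+1}^{n-1}\binom{n-1}{k}$, so $\mathbb{P}(E_2)\ge 1-n(1/2)^{n-1}\sum_{k=\lfloor j\rfloor+1}^{n-1}\binom{n-1}{k}$, which is the second bracket. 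Multiplying the two lower bounds, using independence across distinct games to justify the product (the $C_{n+1}$-games feeding $E_1$ are disjoint from the fair games feeding the union-bound estimate of $E_2$), completes the argument.

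The main obstacle is the probabilistic coupling between $E_1$ and $E_2$: the game $C_i$ versus $C_{n+1}$ influences both $S_{n+1}$ and $S_i$, so $E_1$ and $E_2$ are not literally independent, and one must be careful that the naive product $\mathbb{P}(E_1)\mathbb{P}(E_2)$ is a valid \emph{lower} bound rather than just an approximation. The cleanest route is to note that the event $E_1$ only constrains the $n$ edges incident to $C_{n+1}$, while my upper bound on $\mathbb{P}(\text{incorrect cluster }C_i\text{ too large})$ is engineered to depend only on the $\binom{n}{2}$ fair edges among incorrect clusters — a set disjoint from the edges incident to $C_{n+1}$ — so the two quantities are functions of independent families of Bernoulli variables and the product is exact for that particular decomposition. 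I would also need to check the boundary bookkeeping between $\lceil j\rceil$ and $\lfloor j\rfloor$: when $j$ is an integer the two floor/ceiling expressions coincide, and when it is not, the $+2$ versus $+1$ offsets are exactly what is needed so that "$S_{n+1}\ge\lceil j\rceil+2$ and every $S_i\le\lceil j\rceil+1$" is a strict separation; verifying these inequalities are consistent for all real $j\in[1,n-1]$ is the one routine-but-fiddly calculation I would carry out in full in the appendix.
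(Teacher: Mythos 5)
Your proposal is correct and takes essentially the same route as the paper: the paper's event $B$ is exactly your ``fair-games-only'' event (its $Y_i$ counts only wins against other incorrect clusters, so each incorrect cluster's total score is at most $Y_i+1$ and the threshold $\lceil j\rceil+2$ for $C_{n+1}$ gives strict separation), and it combines $\mathbb{P}(A)\ge\mathbb{P}(A\mid B)\,\mathbb{P}(B)$ with the same $\mathrm{Binomial}(n,p)$ tail for the first bracket and the same union bound over $n$ $\mathrm{Binomial}(n-1,\tfrac12)$ variables for the second. Your explicit decoupling of the edges incident to $C_{n+1}$ from the fair edges among incorrect clusters is precisely the independence the paper uses (implicitly) when it replaces $\mathbb{P}(A\mid B)$ by $\mathbb{P}(X\ge j+2)$.
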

The proof defines the event $B$ that the maximum score over all clusters but the correct one is at most~$j$, and 
lower bounds $\mathbb{P}(A)$ with $\mathbb{P}(A|B)\cdot \mathbb{P}(B)$. It then shows that $\mathbb{P}(A|B)$ and $\mathbb{P}(\overline{B})$ are sums over binomially distributed random variables with probabilities $p$ and $0.5$, respectively. 
\Cref{fig:lower_bound2} compares the empirical \(\mathbb{P}(A)\) to the lower bound as a function of $n$, 
for different values of $p$ and $j\in [\frac{n-1}{2},n-1]$. 
\(\mathbb{P}(A)\) is computed by simulations with \(10^4\) repetitions for every \(n\) and \(p\). 
The figure shows that the larger the $n$, the tighter our lower bound and that  \(\mathbb{P}(A) \ge 0.89\) for \(p \geq 0.9\) and \(n \geq 25\).

Next, we show the probability that, at any iteration, \tool finds a differentiating input 
if $C^*$ has nonequivalent programs. We assume that the oracle's responses to pairwise equivalence queries are independent, and that the probability that it returns a differentiating input for a pairwise equivalence query is $p$ if one of the programs is correct and the other one is incorrect, or $p'$ if both programs are incorrect. 
In the following, we denote by \(d\in[|C^*|]\) the number of incorrect programs in \({C}^*\) and by \(A\) the event of \tool finding a differentiating input.  

\begin{restatable}[]{theorem}{ftg}
%\begin{theorem}
\label{thm:diff_found}
%Under the assumptions and notations above, 
The probability of \tool finding a differentiating input is:
\begin{align*}
\mathbb{P}(A)
&=
\Bigl(1 - (1-p)^{d}\Bigr)\,\frac{\lvert C^*\rvert - d}{\lvert C^*\rvert}
\\
&\quad+
\Bigl(1 - (1-p)^{\lvert C^*\rvert - d}\,(1-p')^{d-1}\Bigr)\,\frac{d}{\lvert C^*\rvert}.
\end{align*}
%\end{theorem}
\end{restatable}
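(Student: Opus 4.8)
The plan is to compute $\mathbb{P}(A)$ by conditioning on the identity of the ``pivot'' program ${C}^*[1]$, i.e., the first program in the selected cluster, against which all pairwise equivalence queries are posed. There are two cases. If ${C}^*[1]$ is correct, which happens with probability $\frac{|{C}^*|-d}{|{C}^*|}$, then every other program that is incorrect (there are $d$ of them) forms a genuinely nonequivalent pair with ${C}^*[1]$. The algorithm succeeds as soon as \emph{one} of these $d$ equivalence queries returns a (validated) differentiating input; since the oracle's responses are independent and each such query returns a differentiating input with probability $p$, the probability that all $d$ fail is $(1-p)^{d}$, so the success probability in this case is $1-(1-p)^{d}$. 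If instead ${C}^*[1]$ is incorrect, which happens with probability $\frac{d}{|{C}^*|}$, then among the remaining $|{C}^*|-1$ programs there are $|{C}^*|-d$ correct ones and $d-1$ incorrect ones. A query against a correct program succeeds with probability $p$, and a query against one of the other incorrect programs succeeds with probability $p'$; by independence the probability that \emph{all} of these fail is $(1-p)^{|{C}^*|-d}(1-p')^{d-1}$, so the success probability in this case is $1-(1-p)^{|{C}^*|-d}(1-p')^{d-1}$. Combining the two cases by the law of total probability yields exactly the claimed expression.

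A few points need to be checked carefully to make the argument airtight. First, one must justify that an incorrect program and a correct program are never equivalent with respect to $t$ in the relevant sense, so that a ``differentiating input'' genuinely exists and the probability-$p$ assumption applies to such pairs; this follows from the problem setup, since a correct program satisfies $t$ and an incorrect one does not, hence they must disagree on some input. Likewise, two distinct clusters of ${C}^*$ produced by \texttt{cluster} already differ on some input in $\mathcal{I}$, but within a cluster two incorrect programs may or may not be semantically equivalent — this is precisely why the assumption gives a separate probability $p'$ for incorrect--incorrect pairs, and the statement implicitly treats $p'$ as the probability of obtaining a \emph{valid} differentiating input (which is $0$ if the two are in fact equivalent, absorbed into the modeling assumption). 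Second, the validation step on \lineref{ln:sfoundfiff} only \emph{accepts} genuine differentiating inputs, so a ``success'' event for a query is exactly ``the oracle returns an \texttt{x} with ${C}^*[1](\texttt{x})\neq {C}^*[i](\texttt{x})$,'' matching the event whose probability is $p$ (resp.\ $p'$); false positives are impossible after validation, so there is no correction term. Third, we use that the pairwise equivalence queries in the loop on \linerefrange{ln:sequ}{ln:efoundfiff} are mutually independent, which is the stated assumption.

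I expect the main obstacle to be purely expository rather than mathematical: making precise what ``the oracle returns a differentiating input with probability $p'$ when both programs are incorrect'' means when the two incorrect programs happen to be semantically equivalent (in which case no differentiating input exists and the true probability is $0$). The clean resolution is to note that within ${C}^*$ all programs share outputs on the current $\mathcal{I}$ but the event $A$ only requires \emph{some} successful query, so it suffices to model each incorrect--incorrect query as succeeding with probability at most $p'$ and each correct--incorrect query as succeeding with probability $p$, and then the displayed equality should be read as holding under the modeling assumption that these are the exact per-query success probabilities. Once that convention is fixed, the computation is the elementary conditioning-and-independence argument sketched above, and no nontrivial estimates are needed.
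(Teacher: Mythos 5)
Your proposal is correct and follows essentially the same argument as the paper: condition on whether the first program ${C}^*[1]$ is correct (with probabilities $\frac{|{C}^*|-d}{|{C}^*|}$ and $\frac{d}{|{C}^*|}$), compute each conditional success probability as one minus the probability that all independent equivalence queries fail, and combine via the law of total probability. The additional remarks you make about interpreting the $p'$ assumption are reasonable clarifications but do not change the proof, which matches the paper's.
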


\begin{figure}[t]
  \centering
  \includegraphics[width=\linewidth]{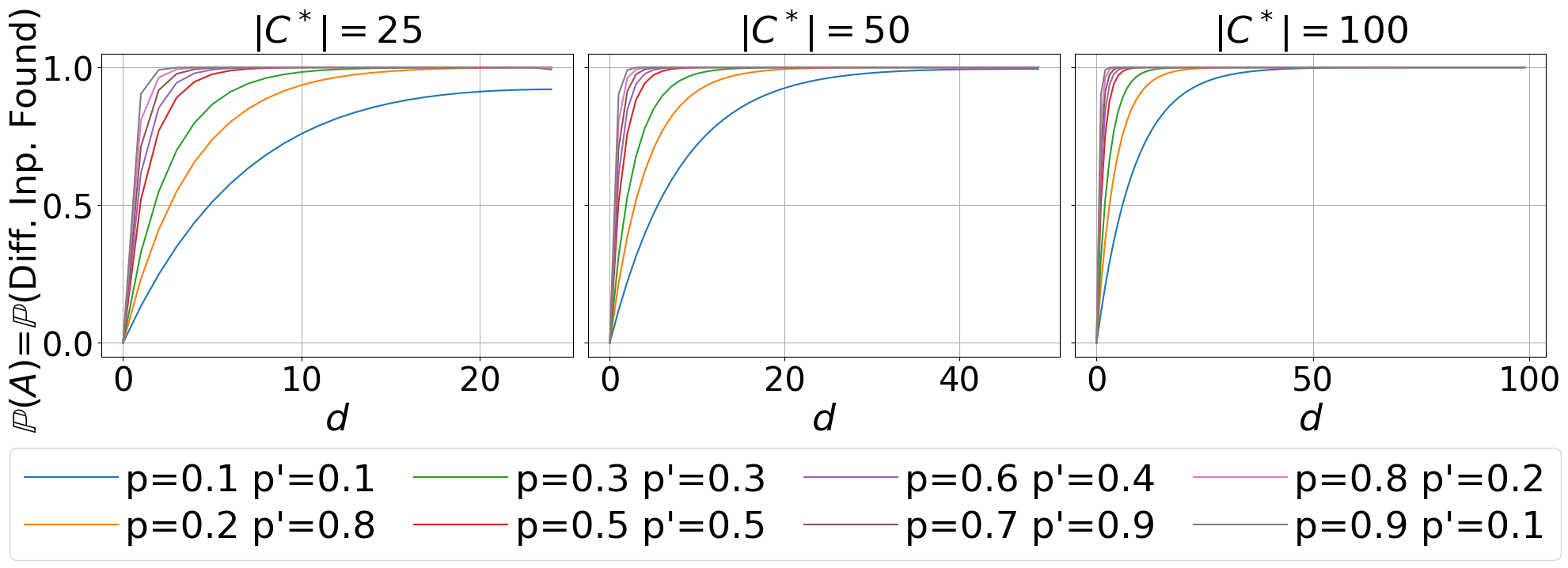}
  \caption{The probability of finding a differentiating input %$\mathbb{P}(A)$
    as a function of the number of incorrect programs $d$. %in ${C}^*$,
%    for different $|{C}^*|$, $p$, $p'$.
}
  \label{fig:prob_find_diff2}
\end{figure}

The proof defines the event $B$ that the first program in $C^*$ is correct and shows expressions for $\mathbb{P}(A|B)\cdot \mathbb{P}(B)+\mathbb{P}(A|\overline{B})\cdot \mathbb{P}(\overline{B})$, which yield the above $\mathbb{P}(A)$.
\Cref{fig:prob_find_diff2} shows \(\mathbb{P}(A)\) as a function of the number of incorrect programs $d$, 
for different values of $|C^*|$, $p$ and $p'$. For $p\geq 0.5$ and $d\geq 5$ (even for low $p'$), 
the probability of \tool identifying that $C^*$ has incorrect programs is high ($>0.9$).
Namely, if the selected cluster has a correct program, the probability of returning an incorrect program is low.
Empirically, $p \ge 0.5$ in practical settings. For example, on \emph{HumanEval}, with \emph{OpenAI o1-mini} as the oracle, we observe $p=0.72$ for \emph{CodeLlama} and $p=0.61$ for \emph{DeepSeek-Coder}.

Lastly, we show that the maximal number of queries of each type is \(|\mathcal{P}| \choose 2\). This polynomial complexity is consistent with exact learners in other 
domains~\cite{HermoO20,MarusicW15,HellersteinPRW96}.

\begin{restatable}[]{theorem}{fte}
\label{thm:memb_worst_case}\label{thm:equiv_worst_case}
\tool poses up to $|\mathcal{P}| \choose 2$ pairwise membership queries and $|\mathcal{P}| \choose 2$ pairwise equivalence queries.
\end{restatable}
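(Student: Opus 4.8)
The plan is to bound the two query types separately and, for each, to sum the per-iteration count over all iterations, exploiting the fact that the selected cluster strictly shrinks. First I would fix notation: index the iterations of the \textbf{while} loop (\lineref{ln:while}) as $1,\dots,T$ (with $T\le|\mathcal{P}|$ by \Cref{lem:termination}); let $m_r$ be the number of clusters returned by \texttt{cluster} in iteration $r$; let $D_r$ be the cluster selected in \lineref{ln:argmax} of iteration $r$ and $c_r=|D_r|$; and set $c_0=|\mathcal{P}|$. Two structural facts drive everything. (i) The clusters of iteration $r$ partition $D_{r-1}$ (or $\mathcal{P}$ when $r=1$) into $m_r$ nonempty parts, one of which is $D_r$; hence $c_{r-1}\ge c_r+(m_r-1)$, i.e.\ $m_r\le c_{r-1}-c_r+1$, and also $c_1\le|\mathcal{P}|$. (ii) Whenever iteration $r{+}1$ occurs, \tool found in iteration $r$ a validated differentiating input \texttt{x} (\linerefrange{ln:sfoundfiff}{ln:efoundfiff}), so re-clustering $D_r$ by $[\texttt{x}]$ produces at least two nonempty clusters and $D_{r+1}\subsetneq D_r$; therefore $c_1>c_2>\dots>c_T\ge 1$. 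This is the quantitative form of the argument behind \Cref{lem:termination}.

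For pairwise equivalence queries, the loop in \linerefrange{ln:sequ}{ln:eequ} of iteration $r$ ranges over $i=2,\dots,|C^*|=c_r$ and may terminate early via \textbf{Break}, so it issues at most $c_r-1$ equivalence queries. Summing, the total is at most $\sum_{r=1}^{T}(c_r-1)$. By fact (ii) the numbers $c_1-1,\dots,c_T-1$ are distinct integers in $\{0,1,\dots,|\mathcal{P}|-1\}$, so their sum is at most $\sum_{k=0}^{|\mathcal{P}|-1}k=\binom{|\mathcal{P}|}{2}$.

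For pairwise membership queries, iteration $r$ issues exactly $\binom{m_r}{2}$ of them, one per pair of clusters (\linerefrange{ln:smem}{ln:emem}). Writing $b_r=c_{r-1}-c_r\ge 0$, fact (i) gives $m_r\le b_r+1$, hence $\binom{m_r}{2}\le\binom{b_r+1}{2}=\tfrac12 b_r(b_r+1)$. Using $\sum_r b_r^2\le\bigl(\sum_r b_r\bigr)^2$ for nonnegative reals together with $\sum_{r=1}^{T}b_r=c_0-c_T=|\mathcal{P}|-c_T\le|\mathcal{P}|-1$, the total is at most $\tfrac12\Bigl(\bigl(\sum_r b_r\bigr)^2+\sum_r b_r\Bigr)\le\tfrac12(|\mathcal{P}|-1)|\mathcal{P}|=\binom{|\mathcal{P}|}{2}$.

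The hard part is not any individual inequality but making the cross-iteration accounting airtight: one must ensure that the query budget ``spent'' while refining a cluster is never charged again against later refinements of its subclusters, which is exactly what facts (i) and (ii) guarantee. I would also record that both bounds are tight: if the initial inputs already separate $\mathcal{P}$ into singletons, iteration $1$ alone poses $\binom{|\mathcal{P}|}{2}$ membership queries; and a run in which every selected cluster has exactly one fewer program than its parent ($c_r=|\mathcal{P}|-r+1$) forces $\sum_{r}(c_r-1)=\binom{|\mathcal{P}|}{2}$ equivalence queries.
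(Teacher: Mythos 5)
Your proof is correct. For the pairwise equivalence queries you argue essentially as the paper does: at most $|C^*|-1$ queries per iteration, with $|C^*|$ at most $|\mathcal{P}|$ in the first iteration and strictly decreasing from the second iteration on, so the per-iteration counts are distinct and telescope to at most $\binom{|\mathcal{P}|}{2}$. For the pairwise membership queries, however, you take a genuinely different route. The paper charges each membership query to a pair of programs drawn from the two compared clusters and observes that any such pair can be charged at most once over the entire run, because after the iteration in which the two programs lie in different clusters, at most one of them survives into $C^*$; hence the total number of membership queries is at most the number of program pairs, $\binom{|\mathcal{P}|}{2}$, with no algebra needed. You instead bound the number of clusters in iteration $r$ by the drop in the selected cluster's size, $m_r\le b_r+1$ with $b_r=c_{r-1}-c_r\ge 0$, and then sum $\binom{m_r}{2}\le\tfrac12 b_r(b_r+1)$ over iterations using $\sum_r b_r^2\le\bigl(\sum_r b_r\bigr)^2$ and $\sum_r b_r=|\mathcal{P}|-c_T\le|\mathcal{P}|-1$. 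Both arguments are sound, and your structural facts (i) and (ii) are exactly what is needed to make the cross-iteration accounting valid. The paper's charging argument is more transparent about the underlying combinatorics (each pair of candidate programs is effectively compared at most once) and is shorter; your size-accounting argument unifies the two bounds under a single telescoping quantity --- the strictly decreasing $c_r$, i.e., the same bookkeeping that proves \Cref{lem:termination} --- and, as a bonus, yields the tightness examples you record at the end, which the paper does not discuss.
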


The proof shows that each pair of programs can be compared by each type of query at most once.

\section{Evaluation}\label{sec:eval}
Here, we evaluate \tool and show that:
(1)~it outperforms existing code selection algorithms, including the state-of-the-art, 
(2)~it significantly improves the pass@1 of LLMs with complex reasoning capabilities, %(when this LLM is the oracle),
(3)~both pairwise membership and pairwise equivalence queries improve its performance,
and (4)~it poses 25.7 queries on average.

\paragraph{Setup.}
We implemented \tool in Python. The experiments were run on an Ubuntu 20.04.6 OS on a dual AMD EPYC 7742 server with 1TB RAM. Unless otherwise specified, each task has \(|\mathcal{P}| = 25\) candidate programs, an initial input list \(\mathcal{I}\) with five inputs generated by \emph{GPT-4o}~\cite{GPT4}, and \emph{OpenAI o1-mini}~\cite{openaio1mini} as the LLM oracle.
We evaluate on three datasets: \emph{HumanEval}~\cite{HumanEval}, which has 164 coding tasks; \emph{MBPP-sanitized}~\cite{Austin21} (denoted as \emph{MBPP}), which has 427 coding tasks; and \emph{APPS}~\cite{hendrycks2021measuring}, whose test set has 5,000 coding tasks.
To fairly compare with prior works~\cite{Mouxiang24,ChenZNZLLC23}, for \emph{HumanEval}, we exclude input-output examples provided within the tasks.
These datasets are the most popular benchmarks for evaluating code generation models~\cite{Code_Llama,DeepSeekCoder,Qwen2Coder}. %LuoX0SGHT0LJ24,LiAZMKMMALCLZZW23,. NijkampPHTWZSX23
Each of these datasets provides test input-output examples for each task (hidden during the selection), which we use to evaluate the correctness of the selected programs.
For each dataset, we consider several models for generating the program space $\mathcal{P}$:
\emph{Codex}~\cite{HumanEval} (code-davinci-002 version), \emph{CodeLlama}~\cite{Code_Llama} (7B-Python version), \emph{StarCoder}~\cite{LiAZMKMMALCLZZW23}, \emph{DeepSeek-Coder}~\cite{DeepSeekCoder} (6.7B Instruct version), and \emph{CodeGen}~\cite{NijkampPHTWZSX23} (MONO-16.1B version). 
Due to the substantially larger number of coding tasks in \emph{APPS}, we follow previous works~\cite{Mouxiang24,ChenZNZLLC23} and consider for its evaluation only the \emph{Codex} model.
The program space generated by each model is obtained from the repository provided by~\citet{Mouxiang24}.
We evaluate a code selection algorithm by the popular pass@1 metric (\Cref{eq:success_rate}). 
Like~\citet{Mouxiang24,ChenZNZLLC23}, we exclude from the pass@1 pairs $(t, \mathcal{P})$ where all programs in $\mathcal{P}$ satisfy $t$, or where no program in $\mathcal{P}$ satisfies $t$.

\paragraph{Baseline Comparisons.}
We compare \tool with two code selection algorithms. \emph{First,} $\mathcal{B}^4$~\cite{Mouxiang24}, the state-of-the-art, defines
the optimal selection strategy using a Bayesian framework, where the posterior probability of passing states between solutions and tests guides the selection. It approximates this posterior using Bayesian statistics and reformulates the problem as integer programming. $\mathcal{B}^4$ has three variants. 
In our comparison, for each dataset and model, we define its pass@1 by the best variant.
 \emph{Second,} \textsc{CodeT}~\cite{ChenZNZLLC23}, which generates test cases for each task, executes the candidate programs on these cases, and selects a program based on dual execution agreement. This agreement considers both the consistency of the outputs with the generated test cases and their agreement with the outputs of other candidate programs. For both $\mathcal{B}^4$ and \textsc{CodeT}, we use the authors' code, with the generated input-output examples provided in the repository of~\citet{Mouxiang24}.
We also compare with the \emph{Original} baseline, i.e., the LLM which returns a program given a task. 
\Cref{tab:general_comparison} shows the results. \tool outperforms all baselines across all datasets and models. On average, its pass@1 exceeds $\mathcal{B}^4$ by $+13.0\%$, \textsc{CodeT} by $+16.6\%$, and \emph{Original} by $+40.7\%$. We assess the statistical significance of these improvements with McNemar's test~\cite{McNemar47}, suitable for comparing two algorithms on the same set of tasks with a binary success metric (e.g., pass@1). It confirms that \tool significantly outperforms each baseline on every dataset ($p$-value $< 0.001$).

\begin{table}[t]
\centering
% Adjust column separation
\setlength{\tabcolsep}{2pt}  % Default is 6pt; increase as needed

\begin{tabular}{ll@{\hspace{0pt}}c@{\hspace{0pt}}cccc}  % No vertical lines specified
\toprule
\textbf{Dataset} & \textbf{Model} & \textbf{\makecell{\texttt{ExPairT-} \\ \texttt{LLM}}} & \textbf{$\mathcal{B}^4$} & \textbf{\textsc{CodeT}} 
& \textbf{Orig} \\
\midrule
\multirow{5}{*}{\makecell{Human- \\ Eval}} 
    & Codex           & \textbf{91.3}   & 80.5  & 74.1  & 38.7 \\
    & CodeLlama       & \textbf{89.1}   & 66.6  & 62.5  & 40.3 \\
    & StarCoder       & \textbf{91.7}   & 70.5  & 65.4 & 37.6 \\
    & DeepSeek-Coder  & \textbf{91.6}   & 80.8  & 80.9  & 65.4    \\
    & CodeGen         & \textbf{94.1}      & 64.0  & 55.1  & 42.4    \\
\midrule
\multirow{5}{*}{MBPP} 
    & Codex           & \textbf{84.4}   & \textbf{84.4}  & 83.4  & 54.6 \\
    & CodeLlama       & \textbf{81.7}   & 76.2  & 76.7  & 50.0 \\
    & StarCoder       & \textbf{81.4}   & 72.7  & 72.4  & 43.6 \\
    & DeepSeek-Coder  & \textbf{78.9}   & 76.8  & 76.8  & 58.3 \\
    & CodeGen         & \textbf{82.6}   & 78.1  & 67.7  & 43.6 \\
\midrule
APPS 
    & Codex           & \textbf{81.4}      & 54.3  & 50.2  & 25.8   \\
\bottomrule
\end{tabular}
\caption{The pass@1 (\%) of \tool and baselines.}
\label{tab:general_comparison}
\end{table}

\paragraph{LLMs with Complex Reasoning.}
We next compare \tool to LLMs with strong reasoning capabilities:
\emph{OpenAI~o1-mini}~\cite{openaio1mini}, \emph{DeepSeek-R1}~\cite{deepseek}, and \emph{Gemini 2.5 Flash}~\cite{gemini}.
For each, we run \tool and use the LLM to generate $10$ candidate programs (for each task), to generate $5$ initial input examples, and as the oracle. 
We run this experiment on $150$ randomly selected tasks from \emph{APPS} and on \textsc{\emph{LiveCodeBench}}~\cite{jain2025livecodebench}, with $511$ tasks. \Cref{tab:all_oracle} shows the pass@1 of \tool and the original pass@1 of the LLM (i.e., the first program it returns). 
\tool improves the original pass@1 by $+32.8\%$ for \emph{OpenAI~o1-mini}, by $+20.4\%$ for \emph{DeepSeek-R1}, and by $+18.9\%$ for \emph{Gemini 2.5 Flash}. For both datasets, a McNemar's test confirms that \tool significantly outperforms the original pass@1 ($p$-value $< 0.001$).

\begin{table}[tb]
  \centering
    \setlength{\tabcolsep}{5pt} % This line adjusts the column spacing
  \begin{tabular}{llcc}
    \toprule
    \textbf{Dataset}
      & \textbf{Model}
      & \textbf{\makecell{\texttt{ExPairT-} \\ \texttt{LLM}}}
      & \textbf{Original} \\
    \midrule
    \multirow{3}{*}{APPS}
      & OpenAI~o1‑mini     & \textbf{91.2} & 54.7 \\
      & DeepSeek‑R1        & \textbf{64.3} & 56.3 \\
      & Gemini 2.5 Flash   & \textbf{70.3} & 54.1 \\
    \midrule
    \multirow{3}{*}{\makecell{LiveCode- \\ Bench}} 
      & OpenAI~o1‑mini     & \textbf{95.8} & 66.7 \\
      & DeepSeek‑R1        & \textbf{88.4} & 55.7 \\
      & Gemini 2.5 Flash   & \textbf{92.9} & 71.4 \\
    \bottomrule
  \end{tabular}
    \caption{The pass@1 (\%) of \tool and LLMs with complex reasoning.}
    \label{tab:all_oracle}
\end{table}

\paragraph{Importance of Our Queries.}
Next, we study the importance of both pairwise membership and pairwise equivalence queries. 
We consider two variants. \emph{First,} $\tool\texttt{\_no\_equ}$, which relies solely on membership queries and skips the pairwise equivalence queries (\lineref{ln:sequ}--\lineref{ln:efoundfiff}).
That is, the programs are clustered based on their outputs for the initial inputs, then a cluster is selected by pairwise membership queries (i.e., by majority vote), and lastly the first program in it is returned.
\emph{Second,} $\tool\texttt{\_no\_mem}$, which skips the pairwise membership queries (\lineref{ln:smem}--\lineref{ln:argmax}) and instead selects the cluster with the most programs as $C^*$. \Cref{fig:no_mem_equ} presents the results. On average, on \emph{HumanEval}, the pass@1 of \tool is higher than $\tool\texttt{\_no\_equ}$ by $+7.7\%$ and $\tool\texttt{\_no\_mem}$ by $+24.6\%$. The McNemar's test confirms that \tool significantly outperforms both variants ($p$-value $< 0.001$).

\begin{figure}[tb]
  \centering
  \includegraphics[width=\linewidth]{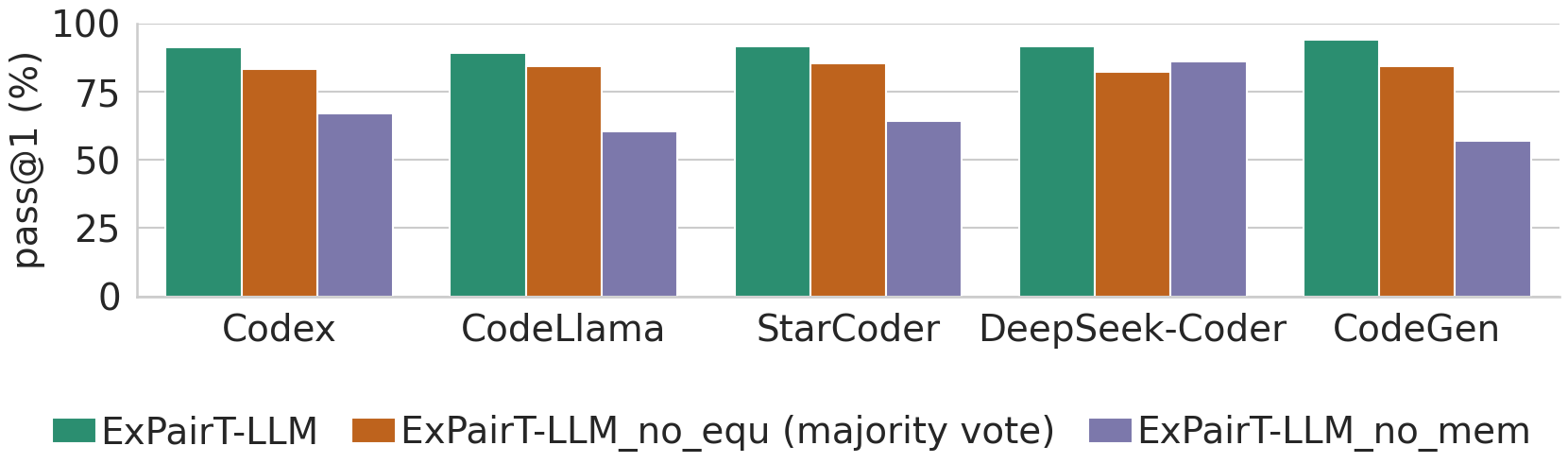}
  \caption{The pass@1 of \tool and variants without equivalence or membership queries, on \emph{HumanEval}.}
  \label{fig:no_mem_equ} % Keep the label here, after \caption
\end{figure}

\begin{table}[t!]
  \centering
  \setlength{\tabcolsep}{8pt}

  \begin{tabular}{lcc}
    \toprule
    {\textbf{Model}} &
     \textbf{P. Mem.} & \textbf{P. Eq.} \\
    \midrule
    Codex           & 12.4 & 9.1 \\
    CodeLlama       & 20.8 & 8.2 \\
    StarCoder       & 16.6 & 8.2 \\
    DeepSeek-Coder  &  5.2 & 14.9 \\
    CodeGen         & 25.4 & 7.6 \\
    \bottomrule
  \end{tabular}
  \caption{The average \#queries on \emph{HumanEval}.}
  \label{tab:queries}
\end{table}

\paragraph{Total Queries.}  
Lastly,
\Cref{tab:queries} shows the number of queries \tool poses on \emph{HumanEval}, 
for five models and $|\mathcal{P}|=25$.
On average, it poses $16.1$ pairwise membership queries and $9.6$ pairwise equivalence queries, i.e., 
 5.3\% and 3.2\% of the worst-case bounds ${25 \choose 2}=300$ (Theorem \ref{thm:memb_worst_case}).
That is, the average total number of queries is $\approx |\mathcal{P}|$.

\section{Conclusion}
We present \tool, an exact learning algorithm for selecting the correct program. \tool poses pairwise membership and pairwise equivalence queries to an LLM oracle. If the LLM is accurate, \tool returns the correct program; otherwise, we present lower bounds on the probability of identifying the correct program. \tool outperforms the state-of-the-art, on average by $+13.0\%$, and improves the pass@1 of LLMs with strong reasoning capabilities by an average of $+24.0\%$.

\bibliography{aaai2026}

\newpage
\section{Proofs}\label{app:proofs}
\fta*
\begin{proof}
$C^*$ is initially $\mathcal{P}$ (\lineref{ln:init}). 
We show that, starting from the second iteration, 
the size of $C^*$ decreases by at least one in~\lineref{ln:argmax}.
Thus, in iteration $|\mathcal{P}|$, at the latest, $C^*$ contains a single program and \tool terminates.
At the end of every iteration, \tool either terminates or finds and validates a differentiating input. 
In the latter case, the next call to \texttt{cluster} returns at least two clusters which partition 
$C^*$, i.e., their sizes are smaller than the size of~$C^*$.
One of these clusters is set to~$C^*$ in~\lineref{ln:argmax}, and thus 
the claim follows.
\end{proof}

\ftb*
\begin{proof}
Let $p^* \in \mathcal{P}$ be a correct program for the task $t$. 
We prove by induction. 

\noindent\textbf{Base:}  
In the first iteration, ${C}^* = \mathcal{P}$. Thus, $p^* \in {C}^*$.

\noindent\textbf{Inductive Hypothesis:}  
Assume that $p^* \in {C}^*$ at the start of iteration $k$.

\noindent\textbf{Inductive Step:}  
We show that $p^* \in {C}^*$ at the start of iteration $k+1$. By the inductive hypothesis, $p^* \in {C}^*$ at the start of iteration $k$. 
In iteration $k$, the programs in ${C}^*$ are clustered %, ${C}_1, {C}_2, \ldots, {C}_\ell$, 
by their outputs for the inputs in $\mathcal{I}$. Let $\bar{{C}}$ be the cluster containing $p^*$.
We show that $\bar{{C}}$ obtains the maximal score in Copeland's method. Thus, it is set to ${C}^*$ (\lineref{ln:argmax}) and
at the start of iteration $k+1$, $p^* \in {C}^*$.
%For each pairwise membership query involving $p^*$, 
%During the pairwise membership query step, the oracle LLM $\mathcal{L}$ compares the outputs of any two clusters and determines which output is more correct with respect to the task $t$. 
Since the clusters differ by their outputs %to the inputs in $\mathcal{I}$
and since $p^*\in \bar{{C}}$, the score of $\bar{{C}}$ is $|C^*|-1$.
This follows since for every input in $\mathcal{I}$, the output produced by $p^*$ (which is identical to the outputs of all programs in $\bar{{C}}$ with respect to $\mathcal{I}$) is more suitable than the outputs of any other cluster.
Any other cluster ${C}_j \neq \bar{{C}}$ loses a point in the query comparing it to $\bar{{C}}$. Thus, $\bar{{C}}$ has the maximal score. 
\end{proof}

\ftc*
\begin{proof}
Assume ${C}^*$ contains an incorrect program $p'$.
By~\Cref{lem:mem}, ${C}^*$ contains a correct program $p^*$, which is not equivalent to $p'$.  
In \lineref{ln:sequ}--\lineref{ln:efoundfiff},
\tool compares the program ${C}^*[1]$ with every other program in $C^*$. 
Assume in contradiction that the oracle does not return a differentiating input for any pairwise equivalence query.
Namely, it determines that ${C}^*[1]$ is equivalent to $p^*$ and $p'$, implying that $p^*$ is equivalent to $p'$ -- contradiction. 
\end{proof}

\ftd*
\begin{proof}
By~\Cref{lem:termination}, \tool terminates.
In that iteration, $C^*$ contains one program or multiple equivalent programs.
This follows by~\Cref{lem:equ}, since in iterations that $C^*$ contains nonequivalent programs, the oracle finds a differentiating input,
and thus \tool continues to another iteration. 
When it terminates, \tool returns the first program in $C^*$.
By~\Cref{lem:mem}, in this iteration, $C^*$ contains a correct program. Thus, it returns a correct program.
\end{proof}

\fth*

\begin{proof}
We define two random variables:
\begin{itemize}
    \item \(X\): the number of pairwise membership queries in which $C_{n+1}$ wins.
    Since $C_{n+1}$ is compared once with each cluster, wins with probability $p$, and the queries are independent: 
$X \sim \mathrm{Binomial}(n, p)$.
    \item \(Y_i\), for \(i \in [n]\): the number of pairwise membership queries where $C_i$ wins an incorrect cluster.
Since $C_i$ is compared once with each cluster, wins with probability $0.5$, and the queries are independent: 
$
    Y_i \sim \mathrm{Binomial}(n-1, 0.5)
$.
\end{itemize}

let $B$ be the event that the maximum over all $Y_i$ is at most~$j$, that is: 
$\max_{1 \le i \le n} Y_i \,\le\,j$. %new 
By the law of total probability:
$$
    \mathbb{P}(A)
    =
    \mathbb{P}(A \;\bigm|B) \cdot 
    \mathbb{P}(B)+
    \mathbb{P}(A \;\bigm|\overline{B}) \cdot 
    \mathbb{P}(\overline{B})
$$

Since probabilities are nonnegative, we lower bound the second term by 0.

\noindent
\textbf{Bounding }
\(\displaystyle \mathbb{P}\bigl(A \mid B\bigr)\)\textbf{:}
Given $B$, for $C_{n+1}$ to obtain the maximal score, it is sufficient that:
$
X \;\ge\; j \;+\; 2  %new
$.
Since \(X \sim \mathrm{Binomial}(n, p)\), we obtain:
\begin{align*}
    \mathbb{P}(A\bigm| B)\ge 
 \mathbb{P}\bigl(X \,\ge\, j + 2\bigr)
    &\;=\;
    \sum_{k=\lceil j \rceil + 2 }^n
    \binom{n}{k}\, p^k\, q^{\,n-k}.
\end{align*}

\noindent
\textbf{Bounding }
\(\displaystyle \mathbb{P}\bigl(B\bigr)\)\textbf{:}
By the complement rule, $\mathbb{P}\bigl(B\bigr)=1-\mathbb{P}\bigl(\overline{B}\bigr)$,
where $\overline{B}$ is
   % $\max_{1 \le i \le n} Y_i > 1.5\lceil \tfrac{n-1}{2}\rceil$.
 $\max_{1 \le i \le n} Y_i >  j$.

By the union bound and since \(Y_i\sim \mathrm{Binomial}(n-1, 0.5)\):
\[
    \mathbb{P}\bigl(\overline{B}\bigr)
    %=\mathbb{P}\!\Bigl(\max_{1 \le i \le n} Y_i > 1.5\lceil \tfrac{n-1}{2}\rceil\Bigr)
    \;\le\;
    \sum_{i=1}^{n}
    %\mathbb{P}\bigl(Y_i > 1.5 \lceil \tfrac{n-1}{2}\rceil \bigr)
\mathbb{P}\bigl(Y_i > j \bigr)
    \;=\;
    n \cdot
    \sum_{k=\lfloor j \rfloor + 1}^{\,n-1}
    \binom{n-1}{k}
    \Bigl(\tfrac{1}{2}\Bigr)^{n-1}.
\]

Hence,
$
    \mathbb{P}(B)
    \;\ge\;
    1
    \;-\;
    n \,\Bigl(\tfrac{1}{2}\Bigr)^{n-1}
\sum_{k=\lfloor j\rfloor + 1}^{\,n-1} \binom{n-1}{k}.
$

Combining all bounds completes the proof.
\end{proof}

\ftg*

\begin{proof}
%Let \(A\) be the event that a differentiating input is found within \(\mathcal{C}^*\).  
%Let \(d\) be the number of correct programs in \(\mathcal{C}^*\), with \(d \le \lvert \mathcal{C}^*\rvert\).  
Let \(B\) be the event that the first program \({C}^*[1]\) is correct.
By the law of total probability,
$
    \mathbb{P}(A) 
    =
    \mathbb{P}(A \mid B)\,\mathbb{P}(B)
    +
    \mathbb{P}(A \mid \overline{B})\,\mathbb{P}(\overline{B})
$.
Since the order of the programs in \({C}^*\) is uniformly random,
$
    \mathbb{P}(B) 
    \;=\; 
    \frac{\lvert {C}^*\rvert - d}{\lvert {C}^*\rvert}
    \text{\ ;}\quad
    \mathbb{P}(\overline{B}) 
    \;=\;
    \frac{d}{\lvert {C}^*\rvert}
$.
Given~$B$, the probability of the oracle returning a differentiating input is the complementary probability
of not returning a differentiating input for all incorrect programs. Since the queries are independent, 
\[
    \mathbb{P}(A \,\mid\, B)
    \;=\;
    1-\mathbb{P}(\overline{A} \,\mid\, B)
    =1-(1-p)^{\,d}.
\]
Similarly, \[
\begin{aligned}
\mathbb{P}(A \mid \overline{B})
&= 1- \mathbb{P}(\overline{A} \mid \overline{B}) \\
&= 1-(
    (1-p)^{\lvert {C}^*\rvert - d} 
    \cdot
    (1-p')^{d - 1}
  )
\end{aligned}
\]
Substituting these terms into the law of total probability completes the proof. 
\end{proof}

\fte*
\begin{proof}
Proof for the pairwise membership queries:
We show that every two programs $p_i,p_j\in \mathcal{P}$ can be compared by a pairwise membership query, defined over their
containing clusters $C_i$ and $C_j$, at most once.  
Thus, the maximal number of pairwise membership queries is ${|\mathcal{P}| \choose 2}$.
Let $p_i$ and $p_j$ be programs that are compared by a pairwise membership query in some iteration.
Since pairwise membership queries are between different clusters, $p_i$ and $p_j$ are not in the same cluster.
Denote their clusters $C_i$ and $C_j$, respectively. 
At the end of this iteration, one cluster is selected as $C^*$, and all other clusters (including the programs they contain)
are discarded. Thus, at least $p_i$ or $p_j$ cannot be compared to any other program in future iterations.

Proof for the pairwise equivalence queries:
Consider $C^*$ in~\lineref{ln:flagf}.
In the first iteration, $|C^*|\leq |\mathcal{P}|$ and starting from the second iteration, $|C^*|$ decreases by at least one, 
since the differentiating input leads to splitting the previous $C^*$. 
Since in every iteration \tool poses up to $|C^*|-1$ pairwise equivalence queries,
it poses at most $\sum_{m=1}^{| \mathcal{P}|-1} m = \frac{|\mathcal{P}|(|\mathcal{P}|-1)}{2}={|\mathcal{P}| \choose 2}$ pairwise equivalence queries.
\end{proof}
% \clearpage
\section{LLM Prompt Examples for Pairwise Queries}
\label{app:ex_query}

\Cref{fig:example_mem_query} and \Cref{fig:example_equ_query} show prompt examples provided to the LLM oracle for pairwise membership and equivalence queries, respectively. The text in \textbf{black} is the general prompt template, which explains the task and specifies the desired response format. The text in \textbf{\textcolor{darkgreen}{green}} is the coding task $t$.
For the pairwise membership query (\Cref{fig:example_mem_query}), the text in \textbf{\textcolor{red}{red}} highlights the inputs differentiating the two programs, and the text in \textbf{\textcolor{blue}{blue}} shows the executed outputs of the two clusters for those inputs. %Only inputs with differing outputs between the programs are included in the query.
For the pairwise equivalence query (\Cref{fig:example_equ_query}), the text in \textbf{\textcolor{purple}{purple}} is the programs compared for semantic equivalence. If the programs are not equivalent, the LLM is expected to provide a differentiating input.
\FloatBarrier % ensures all the text above stays before any figures

\begin{figure}[t]
\centering
\begin{adjustbox}{center}
\begin{tcolorbox}[queryexample, title=Example: Pairwise Membership Query]
You are provided with the partial docstring of a Python function and multiple input-output examples from two different programs (Program 1 and Program 2). Your task is to analyze these examples and determine which program's outputs better align with the function's intended behavior as described in the docstring. **Please provide only the classification as a single term: "Program 1" or "Program 2". Do not include any additional text or explanations.**
Here is the information provided:
\\\\
- **Docstring:** 
\\
\textcolor{darkgreen}{\texttt{
def max\_difference(test\_list):
\begin{quote}
"""\\Write a function to find the maximum difference between available pairs in the given tuple list.\\"""
\end{quote}
}}

- **Input-Output Examples:**
\\
Example 1:\\
For Input: \textcolor{red}{list1=\textbf{\boldmath$\big[$}(3, 5), (1, 7), (3, 10), (1, 2)\textbf{\boldmath$\big]$}}\\
Output of Program 1:\\
\textcolor{blue}{-1}\\
Output of Program 2:\\
\textcolor{blue}{7}\\

Example 2:\\
For Input: \textcolor{red}{list1=\textbf{\boldmath$\big[$}(6, 4), (2, 17), (9, 13), (11, 12)\textbf{\boldmath$\big]$}}\\
Output of Program 1:\\
\textcolor{blue}{2}\\
Output of Program 2:\\
\textcolor{blue}{15}
\end{tcolorbox}
\end{adjustbox}
\caption{An example of a prompt for a pairwise membership query.}
\label{fig:example_mem_query}
\end{figure}

\newpage
\begin{figure*}[hbtp]
\centering
\begin{adjustbox}{center}
\begin{tcolorbox}[queryexample, title=Example: Pairwise Equivalence Query]
You are given the following partial docstring of a Python function:
\\\\
\textcolor{darkgreen}{
\texttt{def max\_difference(test\_list):
\begin{quote}
"""\\Write a function to find the maximum difference between available pairs in the given tuple list.\\"""
\end{quote}
}}
and two Python programs implementing this function:\\
\\
**Program 1:**\\

\textcolor{purple}{
\texttt{def max\_difference(test\_list):}
\begin{quote}
\texttt{res = max([(a - b) for a, b in test\_list])\\
return res}
\end{quote}
}

**Program 2:**\\

\textcolor{purple}{
\texttt{def max\_difference(test\_list):}
\begin{quote}
\texttt{res = max([abs(b - a) for a, b in test\_list])\\
return res}
\end{quote}
}

**Task:** Find an input example such that when this input is provided to both programs, they produce different outputs. \\

**Important:** The input should be **valid according to the function's description**.\\

**Output Format:** If you find such an input, please return it in the format:\\
parameter\_name1=input\_value1, \\parameter\_name2=input\_value2, ...\\

If you cannot find such an input, please return the term "NO\_DIFF".\\

**Important:** Provide **only** the input in the specified format or the term "NO\_DIFF", without any additional explanations or output.
\end{tcolorbox}
\end{adjustbox}
\caption{An example of a prompt for a pairwise equivalence query.}
\label{fig:example_equ_query}
\end{figure*} 

\end{document}